\documentclass[final,onefignum,onetabnum]{siamonline220329}

%%%%%%%% COMMANDS %%%%%%%%

\usepackage{subcaption}
\usepackage{amssymb, mathtools}
\usepackage{algorithm}
\usepackage[noend]{algorithmic}
\usepackage{xcolor}
\usepackage{stmaryrd}
\usepackage{bbm}
\usepackage{enumitem}

\newsiamremark{assumption}{Assumption}

\newcommand{\eps}{\varepsilon}
\newcommand{\norm}[1]{\left\lVert#1\right\rVert}
\newcommand{\abs}[1]{\left\lvert#1\right\rvert}
\newcommand{\R}{\mathbb{R}}
\newcommand{\N}{\mathbb{N}}

\newcommand{\E}{\mathbb{E}}

\newcommand{\cA}{\mathcal{A}}
\newcommand{\cN}{\mathcal{N}}
\newcommand{\cX}{\mathcal{X}}

\newcommand{\Lip}{{\rm Lip}}
\newcommand{\sep}{:}
\newcommand{\vol}{\textrm{vol}}

% Command for this specific file
\newcommand{\errt}[1]{{\rm err}_\tau(#1, E)}
\newcommand{\errtzero}{{\rm err}_\tau(A, E^*)}

\newcommand{\Ni}[1]{\mathcal{N}\left(\mathcal{X}_{(\eps_{#1}, \eps_{#1-1}]}, \frac{\eps_{#1}}{L}\right)}
\newcommand{\Neps}{\mathcal{N}\left(\mathcal{X}_\eps, \frac{\eps}{L}\right)}
\newcommand{\sprec}{\sigma(A, E, \eps)}
\newcommand{\sprecAE}[2]{\sigma(#1, #2, \eps)}
\newcommand{\cinf}{C_{\inf}(f, \eps)}
\newcommand{\teps}{\tilde{\eps}}
\newcommand{\mfdoo}{\text{c.MF-DOO}}
\newcommand{\mfsoo}{\text{c.MF-StoOO}}
\newcommand{\diam}{{\rm diam}}
\newcommand{\defeq}{\vcentcolon =}

\DeclareMathOperator*{\argmax}{arg\,max}

\title{Certified Multi-Fidelity Zeroth-order Optimization}
\author{\'{E}tienne de Montbrun, S\'{e}bastien Gerchinovitz}
\date{\today}

\ifpdf
  \DeclareGraphicsExtensions{.eps,.pdf,.png,.jpg}
\else
  \DeclareGraphicsExtensions{.eps}
\fi

% Prevent itemized lists from running into the left margin inside theorems and proofs
\usepackage{enumitem}
\setlist[enumerate]{leftmargin=.5in}
\setlist[itemize]{leftmargin=.5in}

% Add a serial/Oxford comma by default.

% Used for creating new theorem and remark environments
\newsiamremark{remark}{Remark}
\newsiamremark{hypothesis}{Hypothesis}
\crefname{hypothesis}{Hypothesis}{Hypotheses}
\newsiamthm{claim}{Claim}

% Sets running headers as well as PDF title and authors
\headers{Certified Multi-Fidelity Zeroth-order Optimization}{Etienne de Montbrun, Sébastien Gerchinovitz}

% Title. If the supplement option is on, then "Supplementary Material"
% is automatically inserted before the title.
\title{Certified Multi-Fidelity Zeroth-order Optimization}

% Authors: full names plus addresses.
\author{Étienne de Montbrun\thanks{ANITI \& Toulouse School of Economics, France
  (\email{edemontb@ens-paris-saclay.fr})}
\and Sébastien Gerchinovitz\thanks{IRT Saint Exupéry \& Institut de Mathématiques de Toulouse, France
  (\email{sebastien.gerchinovitz@irt-saintexupery.com})}
}

% Optional PDF information
\ifpdf
\hypersetup{
  pdftitle={Certified Multi-Fidelity Zeroth-order Optimization},
  pdfauthor={Étienne de Montbrun, Sébastien Gerchinovitz}
}
\fi

%%%%%%%% END COMMANDS %%%%%%%%

\begin{document}

\maketitle

\begin{abstract}
We consider the problem of multi-fidelity zeroth-order optimization, where one can evaluate a function $f$ at various approximation levels (of varying costs), and the goal is to optimize $f$ with the cheapest evaluations possible. In this paper, we study \emph{certified} algorithms, which are additionally required to output a data-driven upper bound on the optimization error. We first formalize the problem in terms of a min-max game between an algorithm and an evaluation environment. We then propose a certified variant of the MFDOO algorithm and derive a bound on its cost complexity for any Lipschitz function $f$. We also prove an $f$-dependent lower bound showing that this algorithm has a near-optimal cost complexity. As a direct example, we close the paper by addressing the special case of noisy (stochastic) evaluations, which corresponds to $\eps$-best arm identification in Lipschitz bandits with continuously many arms.\footnote{Published in SIAM/ASA Journal on Uncertainty Quantification 12(4), pp. 1135--1164, 2024. The published version can be accessed at \url{https://doi.org/10.1137/23M1591086}.}
\end{abstract}

\begin{keywords}
  Multi-fidelity optimization, global optimization, bandit algorithms
\end{keywords}

% REQUIRED
\begin{MSCcodes}
  62L15, 65K10, 65Y20, 68Q32, 68T05, 68W40
  % https://cran.r-project.org/web/classifications/MSC.html
  % 62L15: Statistics > Sequential methods > Optimal stopping
  % 65K10: Optimization and variational techniques
  % 65Y20: Complexity and performance of numerical algorithms
  % 68Q32: Computational learning theory
  % 68T05: Learning and adaptive systems
  % 68W40: Analysis of algorithms
\end{MSCcodes}

\section{Introduction}

We consider the problem of multi-fidelity zeroth-order optimization, which unfolds roughly as follows (details are given in Section~\ref{sec:setting}). Let $f:\cX \subset \R^d\to \R$ be a function. Assume that at any $x \in \mathcal{X}$, we can query the value $f(x)$ with any desired accuracy $\alpha>0$, at a cost of $c(\alpha)$. Accurate evaluations (small $\alpha$) come at a high cost. The goal of multi-fidelity optimization is to maximize $f$ with the cheapest evaluations possible.

A typical example is the optimization of a function $f$ computed with finite element modeling.
A case with two fidelity functions (two values of $\alpha$) appears in \cite{sun2011multi}, for sheet-metal forming design with the goal of having no defects in the products (automobile inner panel in that paper).
Given three variables $x_1, x_2, x_3$ modeling strong restraining forces on the metal, the goal is to set this forces to a good value to avoid both rupture and wrinkling.
Two different finite element solvers were used to approximate $f$ at any point $x$: incremental finite element solvers, or a one-step finite element model, which is computationally cheap but provides worse estimates than the former model.
Finding a good design of the forces at a reasonable computational cost is an example of multi (two) fidelity optimization problem.
Many other examples can be found, e.g., in thermodynamics \cite{dewettinck1999modeling,gratiet2013multi}, design of new aircraft \cite{geiselhart2011integration}, or nuclear criticality safety \cite{picheny2010noisy}.

\paragraph{Certified optimization} In practice, algorithms that achieve small optimization errors with small evaluation costs are desirable but may not inform the user when a small optimization error has been obtained. In the example above, an engineer might require to \emph{certify} the output of the algorithm, that is, to get a guaranteed optimization error bound that they can compute by only using the observed data and some (light) prior knowledge on~$f$, as is done, e.g., in \cite{HansenETAL-92-Survey,bachoc2021instance} in the single-fidelity setting. Such requirement can be important in industrial fields involving safety-critical systems (e.g., cars, aircraft, health, nuclear engineering).

In this paper, we study the problem of finding a \emph{certifiably} approximate-maximizer of a Lipschitz function $f$ in the multi-fidelity setting. We quantify the smallest evaluation cost to reach this goal, by deriving nearly-matching upper and lower bounds for any such $f$.

\subsection{Setting}
\label{sec:setting}

We now formally define the setting. Let $\mathcal{X} \subset \R^d$ be a non-empty\footnote{All throughout the paper, $\cX$ is implictly assumed to be non empty.} compact set endowed with a norm $\norm{\cdot}$, and $f:\cX\to \R$ be an $L$-Lipschitz function, with a maximizer $x^\star \in \cX$.\footnote{In fact, all results of Section \ref{sec:mfdoo} still hold if $f$ is simply $L$-Lipschitz around $x^\star$, that is, if $f(x) \geq f(x^\star) - L \norm{x-x^\star}$ for all $x \in \cX$.} Let also $c:(0,+\infty) \to [0,+\infty)$ be a non-increasing cost function.

The problem, which we describe in the online protocol below, can be seen as an interaction between two players: the algorithm $A$ whose goal is to maximize $f$, and an environment $E$ which returns perturbed values of $f$. They interact together in the following way: at every round $t \geq 1$, $A$ picks a query point $x_t \in \cX$ and an evaluation accuracy $\alpha_t>0$; it then observes $y_t = E_t(x_t,\alpha_t) \in [f(x_t) - \alpha_t, f(x_t) + \alpha_t]$ at a cost of $c(\alpha_t)$ ($E_t(\cdot, \alpha)$ is sometimes called $\alpha$-fidelity function); finally $A$ recommends some candidate $x_t^* \in \mathcal{X}$ for a maximizer of $f$, and outputs an error certificate $\xi_t \geq 0$ with the constraint that $\xi_t  \geq \max_{x \in \mathcal{X}} f(x) - f(x_t^*)$ (see a more formal definition below). This way, when using the algorithm, we can not only find an $\eps$-maximizer $x_t^*$, but we can \emph{know} when it is $\eps$-optimal by looking at $\xi_t$, and thus confidently stop searching.

Note that the case of the constant cost $c(\alpha)=1$ for all $\alpha>0$ can be reduced to the single-fidelity setting, where $f$ is observed perfectly (see Appendix~\ref{sec:specialcase-constantcost} for details).

\begin{algorithm}
  \caption*{\textbf{Online Protocol:} Certified multi-fidelity zeroth-order optimization}
  \label{algo:accuracy}
  \mbox{\textbf{Init}: The environment secretly observes $f$ and picks $E = (E_t)_{t\geq 1}$ with $E_t(x,\alpha) \in \bigl[f(x)-\alpha,$}
  \mbox{$f(x)+\alpha\bigr]$ for all $t \geq 1$, $x \in \cX$, and $\alpha>0$ (we also call $E$ \emph{the environment})}\ \\[-0.5cm]
  \begin{algorithmic}[1]
    \FOR{ $t=1,2,\ldots,$}
      \STATE $A$ chooses a query point $x_t \in \mathcal{X}$ and an evaluation accuracy $\alpha_t > 0$
      \STATE $A$ incurs a cost $c(\alpha_t)$
      \STATE $E$ returns $y_t = E_t(x_t,\alpha_t) \in [f(x_t) - \alpha_t, f(x_t) + \alpha_t]$ (inaccurate evaluation of $f(x_t)$)
      \STATE $A$ returns a recommendation $x_t^*$ for the maximum of $f$, with an error certificate $\xi_t \geq 0$
      \ENDFOR
  \end{algorithmic}
\end{algorithm}

\noindent
Next we introduce key definitions before describing the optimization goal, our contributions, related works, and useful notation.

\paragraph{Definitions: environments and certified algorithms}
For any $L$-Lipschitz function $f:\mathcal{X} \to \R$, we define the set $\mathcal{E}(f)$ of all environments associated with $f$, which are sequences of functions $E = (E_t)_{t\geq 1}$ with $E_t(x,\alpha) \in \bigl[f(x)-\alpha,f(x)+\alpha\bigr]$ for all $t \geq 1$, $x \in \cX$ and $\alpha>0$. We assume for simplicity that the sequence $E \in \mathcal{E}(f)$ is fixed from the beginning of the online protocol.\footnote{Since we only consider deterministic algorithms $A$ and work towards guarantees that hold uniformly over all environments, this is in fact equivalent to playing against adversarial environments.}

We can now formally define \emph{certified algorithms}. As can be seen from the online protocol above, $x_t$ and $\alpha_t$ are deterministic functions of the past observations $y_1,\ldots,y_{t-1}$, while $x^*_t$ and $\xi_t$ are deterministic functions of $y_1,\ldots,y_{t}$. We additionally require that the certificates $\xi_t$ satisfy $\xi_t \geq \max_{x \in \mathcal{X}} f(x) - f(x^*_t)$ for all rounds $t \in \N^*$, all $L$-Lipschitz functions $f:\mathcal{X} \to \R$ and all environments $E \in \mathcal{E}(f)$.
We call \emph{certified algorithm} any such sequence of functions $A = \bigl(x_t(\cdot), \alpha_t(\cdot),x^*_t(\cdot),\xi_t(\cdot)\bigr)_{t \geq 1}$, and let $\cA$ denote the set of all certified algorithms. With a slight abuse of notation, we also sometimes write  $x_t(E)$, $\alpha_t(E)$, $x^*_t(E)$ and $\xi_t(E)$ to make the dependency on $E$ more explicit.

\paragraph{Optimization goal} Recall that $x^\star \in \cX$ denotes a maximizer of $f$. A classic goal in multi-fidelity optimization is to reach a small optimization error $f(x^\star) - f(x^*_t)$ while minimizing the total cost $\sum_{s=1}^t c(\alpha_s)$ (see, e.g., \cite{sen2018multi,sen2019noisy,fiegel2020adaptive}). In this paper, we address the stronger goal of finding a recommendation $x^*_t$ with an error certificate $\xi_t$ below $\eps$ (and thus an optimization error \emph{known} to be bounded by $\eps$), with the smallest cumulative cost $\sum_{s=1}^t c(\alpha_s)$ possible.

More formally, for any environment $E \in \mathcal{E}(f)$, we define the \emph{cost complexity} $\sprec$ as the smallest total cost for which $A$ can output a certificate below $\eps$ (when run against $E$). It can be expressed as follows (by convention, $\sprec = + \infty$ if no such $C$ exists):
\begin{equation}
\sprec = \inf \left\{ C \in \R \sep \exists \tau \in \N^*, \sum_{t=1}^\tau c(\alpha_t(E)) \leq C \text{ and } \xi_\tau(E) \leq \eps \right\} .
\label{eq:defcostcomplexity}
\end{equation}

\noindent
Equivalently, since costs are nonnegative, $\sprec$ is equal to the total cost incurred by $A$ (when run against $E$) until its certificate~$\xi_t$ falls below $\eps$ for the first time.

In this paper, we are interested in certified algorithms with small cost complexity against any environment, that is, in algorithms $A \in \mathcal{A}$ that approximately reach the infimum
\[
\inf_{A \in \mathcal{A}} \sup_{E \in \mathcal{E}(f)} \sprec
\]
for any unknown $L$-Lipschitz function $f$.
Importantly, the above min-max quantity depends on $f$ (through the set $\mathcal{E}(f)$), since some functions are easier than others to maximize with certified algorithms.

\subsection{Main contributions and outline of the paper}

In this paper, we study the cost complexity of certified algorithms to maximize $L$-Lipschitz functions in the multi-fidelity setting. We prove nearly-matching $f$-dependent upper and lower bounds, which extend the single-fidelity results of \cite{bachoc2021instance}. More precisely, we make the following set of contributions.\\[-0.2cm]

\begin{itemize}
    \item On the modeling side, we formalize the problem of \emph{certified} multi-fidelity zeroth-order optimization (see Section~\ref{sec:setting} above).
    \item In Section~\ref{sec:mfdoo} we define a certified variant of the MFDOO algorithm \cite{sen2018multi}. We bound its cost complexity in terms of the key quantity $S_{\beta, L}(f,\eps)$ defined in \eqref{eq:Sbeta} below.
    \item In Section~\ref{sec:lower_bound} we derive a nearly-matching $f$-dependent lower bound that holds for all certified algorithms (i.e., we derive a lower bound on the min-max quantity above).
    \item Finally, in Section~\ref{sec:stochastic} we address the special case of noisy evaluations of $f$, which corresponds to $\eps$-best arm identification in Lipschitz bandits with continuously many arms. By a simple reduction to the (deterministic) multi-fidelity setting above, we derive a high-probability sample complexity bound which solves a conjecture by \cite{bachoc2021instance}.
\end{itemize}

\ \\
In the appendix we collect some properties and examples (Appendix~\ref{sec:propertiesExamples}), technical proofs (Appendix~\ref{sec:missing_proofs}), useful simple geometric lemmas (Appendix~\ref{sec:proofs}), together with a formal intuitive reduction from the single- to the multi-fidelity setting (Appendix~\ref{sec:specialcase-constantcost}).

\subsection{Related works}
\label{sec:relatedworks}

This paper has connections with several rich literatures, about single- or multi-fidelity settings, within deterministic or Bayesian frameworks, with or without certificates. Next we provide a (non-comprehensive) subset of references to these related works.

In the single-fidelity setting (where the unknown function $f$ can be evaluated perfectly), the problem reduces to zeroth-order global optimization.
Optimization algorithms \emph{without certificates} have a very long history, in convex optimization (e.g., \cite{Nes-04-ConvexOptimization,BoVa-04-ConvexOptimization,Bub-15-ConvexOptimization} for first-order methods and beyond, or \cite{larson19-derivativeFreeOptimizationMethods} for zeroth-order methods), non-convex optimization (e.g., \cite{HansenETAL-92-Survey,HansenETAL-92-NewAlgorithms,JaKa-17-NonConvexOptimizationML,larson19-derivativeFreeOptimizationMethods}), Bayesian optimization (e.g., \cite{garnett2023bayesian}), stochastic optimization (e.g., \cite{Spa-03-StochasticSearchOptimization,bonnans19-convexStochasticOptimization}), or bandit optimization (e.g.,  \cite{munos2014bandits,slivkins2019introduction}). Among the algorithmic techniques that are closest to this paper, we can mention the Piyavskii-Shubert algorithm \cite{piyavskii1972algorithm,shubert1972sequential} for zeroth-order Lipschitz optimization, as well as discretized variants such as the branch-and-bound algorithm of \cite{perevozchikov1990complexity} or the DIRECT algorithm by \cite{jones1993lipschitzian}, to name a few. More recently several variants were also derived in the bandit community, since zeroth-order optimization can be cast as a bandit problem with continuously many arms (or \emph{$\cX$-armed bandits}). Examples of such bandit algorithms for perfect or noisy (stochastic) evaluations of $f$ include DOO \cite{munos2011optimistic}, HOO \cite{bubeck2011x}, (Sto)SOO \cite{munos2011optimistic,valko2013stochastic}, POO \cite{grill2015black}, which are all based on a hierarchical partition of the input domain, as the \mfdoo{} algorithm of the present paper. Another discretization approach (yet computationally more challenging) is the Zooming algorithm for general metric spaces \cite{kleinberg2008multi,kleinberg2019bandits}. We refer the reader to \cite{munos2014bandits,slivkins2019introduction} for further details and references on bandit algorithms for continuously many arms.

% Bayesian Optimization
Also close to our work is the Bayesian optimization literature, because of its rich contributions to the multi-fidelity setting. For single-fidelity optimization, we can mention the seminal work of Kushner \cite{kushner1964new}, the use of the expected improvement function introduced in \cite{mockus75-bayesianMethodsExtremum}, and the EGO algorithm of \cite{jones1998efficient}, together with convergence rates in \cite{bull2011convergence}. The kriging community also addressed global optimization with noisy observations \cite{forrester2006design,picheny2013benchmark}. In \cite{srinivas2009gaussian,SrinivasETAL-12-GPbandits} the authors design and study a Gaussian Process-based bandit algorithm (GP-UCB) and derive regret bounds both under Bayesian and deterministic assumptions on the underlying function~$f$.
This algorithm was later adapted in \cite{contal2013parallel} for the case of sequential mini-batch queries in parallel computing.
A detailed review of the Bayesian optimization literature can be found in \cite{garnett2023bayesian}. 

We now focus on two key features of our setting: multi-fidelity and error certificates.

\paragraph{Multi-fidelity optimization}
Multiple works in the multi-fidelity setting come from the Bayesian optimization literature. For instance, Huang \cite{huang2006sequential} designed a multi-fidelity variant of the EGO algorithm; a multi-fidelity counterpart of GP-UCB (MF-GP-UCB) was studied in \cite{kandasamy2016gaussian,kandasamy2019multi}, while co-kriging (a multi-fidelity extension of kriging) was studied, e.g., in \cite{forrester2007multi,qian08-bayesianHierarchicalModeling}.
A framework where the (finitely many) fidelity functions may be mutually dependent was introduced in \cite{song2019general}, alongside a high probability guarantee.
A more complete review on multi-fidelity Bayesian optimization can be found in \cite{peherstorfer2018survey} and in \cite[Section 11.5]{garnett2023bayesian}.

% Bandit Multi-Fidelity
Recently bandit algorithms were also extended to the multi-fidelity setting. 
Starting with the case of finitely-many arms \cite{kandasamy2016multi}, several algorithms were designed for continuously-many arms, which corresponds to multi-fidelity zeroth-order optimization \emph{without} certificates. An extension of DOO (called MFDOO) was provided by \cite{sen2018multi}, together with an optimization error bound (or simple regret bound) for a given overall cost budget. The authors also study a variant inspired from POO (MFPDOO) to handle the case of unknown smoothness (see a paragraph below). 
Similar algorithms based on HOO and POO (called MFHOO and MFPOO) were later introduced by \cite{sen2019noisy} to cope with additional (stochastic) noise.
In \cite{fiegel2020adaptive} the authors develop the Kometo algorithm (based on StroquOOL) and prove nearly optimal upper bounds (with matching minimax lower bounds) on the optimization error given an overall cost budget. Their analysis also covers the cases of unknown smoothness and of possibly unbounded costs at accuracies $\alpha$ in the neighborhood of $\alpha = 0$. The case of delayed and noisy feedback was addressed in \cite{wang2022procrastinated} with a generalization of HOO.

\paragraph{Error certificates}
Though convergence results about the optimization error $f(x^\star)-f(x^*_t)$ in terms of a total cost budget are now well established in the multi-fidelity setting, the question of \emph{certifying} approximate maximizers with minimum total cost has not been addressed, to the best of our knowledge.

The notion of error certificate appeared in several other settings, such as, e.g., in convex optimization \cite{BoVa-04-ConvexOptimization}, where the \emph{duality gap} between primal and dual feasible points plays the role of an error certificate. In zeroth-order Lipschitz optimization with perfect evaluations of $f$ (single-fidelity setting), the Piyavskii-Shubert algorithm \cite{piyavskii1972algorithm,shubert1972sequential} is naturally endowed with an error certificate, which is the difference between the maximum value of a guaranteed upper bounding function of $f$ and the maximum value $f(x_s)$ observed so far. For one-dimensional inputs, a tight analysis of the number of evaluations before which this certificate falls below~$\eps$ was given in \cite{HansenETAL-91-NumberIterationsPiyavskii} (see also \cite{danilin71-estimationEfficiencyAbsoluteMinimumFinding}), with a simple integral expression. This result was generalized to multi-dimensional inputs by \cite{bouttier2020regret,bachoc2021instance}, and shown in \cite{bachoc2021instance} to be achievable with the tractable\footnote{Tractability refers to a small (logarithmic) number of elementary operations per evaluation of $f$.} c.DOO algorithm, with a nearly matching $f$-dependent lower bound. In this paper we extend the complexity analysis of \cite{bachoc2021instance} to the multi-fidelity setting, using a certified variant of MFDOO \cite{sen2018multi} for the upper bound.

Another problem that involves the question of certifying an optimization algorithm's output is \emph{best arm identification in stochastic bandits} (e.g., \cite{evendar02-PACboundsMAB,GK16colt-OptimalBAI,degenne19-nonAsymptoticPureExplorationSolvingGames} and \cite[Chapter~33]{LS20-banditalgos}). In this setting, an algorithm sequentially queries points (or arms) and observes stochastic rewards, until it decides to stop and recommend a point whose expected reward is believed to be (close to) optimal. In particular, for $\eps$-best arm identification ($\eps$-BAI), algorithms are required upon stopping to recommend a point (or multiple points) whose expected reward is $\eps$-optimal with probability at least $1-\gamma$;\footnote{We write $\gamma$ for the risk level, since the letter $\delta$ will be used for another purpose.} see, e.g., \cite{evendar02-PACboundsMAB,mannor04-sampleComplexityExplorationMAB,evendar06-actionElimination,garivier21-nonAsymptoticSequentialTests}. This so-called $(\eps,\gamma)$-PAC condition is a statistical analog of getting a certificate $\xi_t \leq \eps$ in our setting. Furthermore, the main goal of $\eps$-BAI is to minimize the sample complexity (i.e., the (expected) total number of queries before stopping), which is analogous to minimizing the cost complexity in our multi-fidelity setting. As we will show in Section~\ref{sec:stochastic}, the connection between the two problems can be made explicit: our algorithm for certified multi-fidelity Lipschitz optimization can be used to solve an instance of $\eps$-BAI in Lipschitz bandits. Though this problem has been studied by \cite[Appendix~F]{wang21-fastPureExplorationFranckWolfe} for $\eps=0$ and a finite set of arms $\cX$, we are unaware of any earlier results for $\eps$-BAI in Lipschitz bandits with a continuous set $\cX \subset \R^d$. The special case of linear bandits has however received a lot of attention, either with finitely many arms (e.g., \cite{soare14-BAILinearBandits,degenne20-gamificationPureExplorationLinearBandits,jedra20-BAILinearBandits,kazerouni21-BAIGeneralizedLinearBandits,wang21-fastPureExplorationFranckWolfe,jourdan22a-choosingAnswersEpsBAILinearBandits}) or continuously many arms (e.g., \cite{jedra20-BAILinearBandits,bhat22-identifying_near_optimal_decis}, or \cite{carlsson24-pureExplorationLinearConstraints} for a related problem).

\paragraph{Adaptivity to smoothness versus error certificates}
Another important series of works is about adaptivity to the unknown smoothness of $f$, that is, the question of achieving nearly optimal optimization performances with an algorithm that has (almost) no prior knowledge on the smoothness of $f$. Among the many algorithms designed to that end, let us mention the seminal DIRECT algorithm of \cite{jones1993lipschitzian}, the $Z(k)$ algorithm of \cite{horn06-UnknownLipschitz}, as well as bandit algorithms (with simple or cumulative regret guarantees) including (Sto)SOO \cite{munos2011optimistic,valko2013stochastic}, the two-phase algorithm of \cite{bubeck11-withoutLipschitzConstant}, POO and GPO \cite{grill2015black,SKV19-GeneralParallelOptimizationWithoutMetric}, AdaLIPO \cite{malherbe2017global}, SequOOL and StroquOOL \cite{bartlett2019simple}, and their multi-fidelity variants \cite{sen2018multi,sen2019noisy,fiegel2020adaptive}. See also \cite{locatelli2018adaptivity} for a detailed account on possible and impossible adaptivity results in the single-fidelity setting.

Though adaptivity to unknown smoothness is a key robustness feature of optimization algorithms, we stress that it is in a way \emph{incompatible} with the certificate requirement. For instance, as noted by \cite{bachoc2021instance} in the single-fidelity setting, when optimizing a Lipschitz function~$f$ with unknown Lipschitz constant $\Lip(f)$, it is impossible to produce a finite certificate $\xi_t$ after any number $t$ of evaluations of $f$, since there could be an arbitrarily steep bump in a yet unobserved input region. More formally, if $f$ has a maximizer $x^\star$ within the interior of $\cX$, for small $\eps>0$, the lower bound of \cite[Theorem~2]{bachoc2021instance} scales at least as $\bigl(L/\Lip(f)\bigr)^d$ when $L \to +\infty$, which implies that the minimum number of evaluations that certified algorithms need for the function $f$ is arbitrarily large, if we require such algorithms to output valid certificates for all Lipschitz functions $g$ with arbitrarily large Lipschitz constants $\Lip(g)$. The same intuitive remark applies to our multi-fidelity setting, by the lower bound of Theorem~\ref{thm:lower_bound} in Section~\ref{sec:lower_bound}.

\subsection{Notation}
\label{sec:notations}

We collect below some notation that is used all throughout the paper, including the key quantity $S_{\beta, L}(f,\eps)$ defined in \eqref{eq:Sbeta} below.

\paragraph{Standard notation} $\N = \{0, 1, 2, \ldots \}$ denotes the set of natural numbers, and $\N^*=\N\backslash\{0\}$ denotes the set of positive natural numbers. For any $x \in \R^d$ and $r >0$, we write $B(x,r) = \{u \in \R^d: \norm{u-x} \leq r \}$ for the closed ball centered at $x$ with radius $r$.

\paragraph{Lipschitz functions, $\eps$-optimal points, layers}
Let $\mathcal{F}_L$ denote the set of $L$-Lipschitz functions from $\mathcal{X}$ to $\R$. Let also $\diam(\mathcal{X}) = \sup_{x,y \in \mathcal{X}} \norm{x-y}$ denote the diameter of $\mathcal{X}$. Since $f$ is $L$-Lipschitz, the largest possible optimization error $f(x^\star)-f(x^*_t)$ is bounded by $\eps_0 := L\cdot\diam(\mathcal{X})$. In the sequel, we will thus only consider values $\eps \in (0,\eps_0)$, and set $m_\eps \defeq \left\lceil\log_2(\eps_0/\eps) \right\rceil$.
For any $1 \leq k \leq m_\eps-1$ we define the intermediate target error $\eps_k := \eps_0 2^{-k}$; we also set $\eps_{m_\eps} := \eps$.

For any $0 \leq a < b$, we denote the set of $a$-optimal points by $\mathcal{X}_a := \{x \in\mathcal{X} \sep f(x^\star) - f(x) \leq a\}$, and we define the \emph{layer} $\mathcal{X}_{(a, b]} := \{ x \in \mathcal{X} \sep a< f(x^\star) -f(x) \leq b\}$, which is the set of $b$-optimal points that are not $a$-optimal.

\paragraph{Packing number}
For any $r>0$, the $r$-packing number $\mathcal{N}(\mathcal{X}', r)$ of a subset $\mathcal{X}' \subset \mathcal{X}$ is the largest number $n$ of $r$-separated points $x'_1, \ldots, x'_n \in \mathcal{X}'$, that is, such that $\norm{x'_i - x'_j}>r$ for all $i\neq j \leq n$. (By convention, $\mathcal{N}(\mathcal{X}', r) = 0$ if $\mathcal{X}'$ is empty. Note also that $\mathcal{N}(\mathcal{X}', r) < +\infty$ since $\mathcal{X}$ is compact.)

\paragraph{The complexity quantity $S_{\beta, L}(f, \eps)$}
For any $\beta>0$, we set
\begin{equation}
  \label{eq:Sbeta}
  S_{\beta, L}(f, \eps) \defeq \Neps c\left(\beta\eps\right) + \sum_{k=1}^{m_\eps} \Ni{k}c\left(\beta\eps_k\right) \;.
\end{equation}

This quantity is a multi-fidelity generalization of the quantity $S_{C}(f,\eps)$ introduced by \cite{bouttier2020regret} in the single-fidelity setting. In Appendix~\ref{sec:propertiesExamples} we comment on the dependence on $\beta,L,\Vert\cdot\Vert$, and discuss two simple examples that will prove useful in interpreting our upper and lower bounds.

As we will see later, $S_{\beta, L}(f, \eps)$ plays a key role in the optimal cost complexity of certified algorithms. We briefly explain why.
Let $x \in \mathcal{X}_{(\eps_k, \eps_{k-1})}$. To realize that $x$ belongs to the layer $\mathcal{X}_{(\eps_k, \eps_{k-1})}$, a single call to $f$ at $x$ with (prophetic) evaluation accuracy  $\alpha \approx \eps_k$ would be enough, yielding a cost roughly of $c(\eps_k)$. By $L$-Lipschitz continuity of $f$, this single evaluation also helps classify nearby points (at distance roughly $\eps_k/L)$ within the same layer. Therefore, the whole layer could (hopefully) be identified with roughly $\Ni{k}$ evaluations of $f$ at accuracy $\alpha \approx \eps_k$. Repeating these arguments over all layers and using another (similar) argument over $\cX_\eps$ suggests that the sum $S_{\beta, L}(f, \eps)$ above characterizes the optimal cost complexity of certified algorithms. In the next sections we prove upper and lower bounds supporting this intuition.

\section{The Certified MFDOO Algorithm, and its Cost Complexity}
\label{sec:mfdoo}
In this section we define a certified version of the MFDOO algorithm \cite{sen2018multi}, and then study its cost complexity (Theorem~\ref{thm:upper_bound} below), which we will prove to be nearly optimal in Section~\ref{sec:lower_bound}.

Similarly to MFDOO \cite{sen2018multi} and its ancestors (e.g., the branch-and-bound algorithm of \cite{perevozchikov1990complexity}, HOO \cite{bubeck2011x}, DOO \cite{munos2011optimistic}, POO \cite{grill2015black} c.DOO \cite{bachoc2021instance}, etc), our algorithm takes as input a hierarchical partitioning of $\mathcal{X}$, that is, a tree-based structure $X$ in which each node represents a region of $\mathcal{X}$ and has $K$ children, which correspond to a $K$-partition of the parent region.
More precisely, $X$ is an infinite sequence of subsets $(X_{h,i})_{h \in \N, i \in \{0, \ldots, K^h-1\}}$ of $\mathcal{X}$ called \emph{cells} such that $\mathcal{X} \subset X_{0, 0}$, and for any depth $h \in \N$ and location $0\leq i \leq K^h-1$, the cells $X_{h+1, Ki}, \ldots, X_{h+1, K(i+1)-1}$ form a partition of $X_{h,i}$ (the nodes $(h+1, Ki), \ldots, (h+1, K(i+1)-1)$ are the \emph{children} of node $(h,i)$).
Each cell has a \emph{representative} $x_{h,i} \in X_{h,i}$. We assume that $x_{h,i} \in \mathcal{X}$ whenever $X_{h,i} \cap \mathcal{X} \neq \varnothing$. (A typical example is the barycenter of the cell, if inside $\mathcal{X}$.)

In all the paper, we make the following two assumptions. The first one is classical (e.g., \cite{bubeck2011x,munos2011optimistic}). The second one appeared in \cite{bachoc2021instance} and was useful to derive bounds on DOO or c.DOO in terms of packing numbers of layers $\mathcal{X}_{(\eps_k,\eps_{k-1}]}$. Both assumptions can always be satisfied when $\mathcal{X}$ is compact. For example, if $\mathcal{X} = [0,1]^d$ and $\Vert \cdot \Vert$ is the sup norm, we can take the regular dyadic partitioning $(X_{h,i})_{h \in \N, i \in \{0, \ldots, 2^{d h}-1\}}$ consisting of $2^{d h}$ cubes of size $2^{-h}$ at depth $h \geq 0$, with centers $x_{h,i}$. Then, Assumptions~\ref{assum:diameter} and~\ref{assum:nu} hold true with $R=1$ and $\delta=\nu=1/2$. If $\cX \subset \R^d$ is any other compact set, one way to get valid values for $R$ and $\nu$ with $\delta=1/2$ consists in considering the smallest hypercube containing $\cX$. Note that other values of $\delta,R,\nu$ might be possible, but this does not affect the rate of our cost complexity bounds as $\eps \to 0$, under a mild assumption on the cost function $c$ (see Appendix~\ref{sec:propertiesExamples}).

\begin{assumption}
  \label{assum:diameter}
  There exist two positive constants $\delta \in (0, 1)$ and $R>0$ such that, for all $h \in \N$, $i \in \{0,\ldots,K^h-1\}$, and all $u, v \in X_{h,i}$, we have $\norm{u-v} \leq R\delta^h$.
\end{assumption}

\begin{assumption}
  \label{assum:nu}
  There exists $\nu>0$ such that, with $\delta$ as in Assumption \ref{assum:diameter}, for any $h,h' \in \N$, $i \in \{0, \ldots, K^h-1\}$ and $i' \in \{0, \ldots, K^{h'}-1\}$ with $(h,i) \neq (h', i')$, the representatives $x_{h,i}$ and $x_{h',i'}$ defined above satisfy $\norm{x_{h,i} - x_{h',i'}} \geq \nu \delta^{\max\{h, h'\}}$.
\end{assumption}

We now define \mfdoo{} (Certified Multi-Fidelity Deterministic Optimistic Optimization). The pseudo-code is given in Algorithm \ref{algo:mfdoo} below. The algorithm maintains a set $\mathcal{L}_t$ of active nodes (or \emph{leaves}) whose associated cells cover $\mathcal{X}$.
% At the end of each iteration, \mfdoo{} maximizes a surrogate in order to pick the next leaf $(h^*, i^*)$ to expand (Line \ref{eq:argmax}).
At the end of each iteration (Line \ref{eq:argmax}), \mfdoo{} picks the most promising leaf $(h^*, i^*)$ by maximizing the surrogate $y_{h,i} + LR\delta^h + \alpha_{h,i}$, which is an upper bound on $f(x)$ for any $x \in X_{h,i}$.\footnote{Indeed, $y_{h,i}$ is an $\alpha_{h,i}$-approximation of $f(x_{h,i})$, $f$ is $L$-Lipschitz, and the maximum distance from a point in $X_{h,i}$ to $x_{h,i}$ is at most $R\delta^h$, so that $\max_{x \in X_{h,i}} f(x) \leq f(x_{h,i}) + LR\delta^h\leq y_{h,i} + LR\delta^h + \alpha_{h,i}$.}
This step is an instance of the so-called \emph{optimism principle} (or \emph{optimism in the face of uncertainty}), which was used multiple times in the past, in the stochastic or deterministic zeroth-order (bandit)  optimization literatures (e.g., \cite{AuCeFi-02-FiniteTimeBandits} for the UCB1 algorithm, as well as earlier or later references that can be found, e.g., in \cite{perevozchikov1990complexity,munos2011optimistic,munos2014bandits,kleinberg2019bandits,LS20-banditalgos,slivkins2019introduction}).

Then, during the next iteration, \mfdoo{} develops the tree by querying one after the other all the (feasible) children of $(h^*, i^*)$.
At Line~\ref{eq:pick}, it picks $x_t = x_{h^*+1,j}$ as the next query point and $\alpha_t = LR \delta^{h^*+1}$ for the accuracy. (A much smaller value of $\alpha_t$ could be counter-productive: it could come at a much higher cost, while not improving the optimization process by much, since the surrogate is an over-approximation of $f$ with a mistake possibly of the order of $LR\delta^{h^*+1}$ on the cell $X_{h^*+1,j}$ even if $f$ were observed exactly.)
After receiving the approximate evaluation $y_t$ of $f(x_t)$, \mfdoo{} returns the recommendation $x_t^* = x_{\tilde{t}}$ (see Line \ref{eq:recommend}), which is the point with the currently best guaranteed value of~$f$. Finally, the certificate $\xi_t$ at Lines~\ref{eq:certificate} or~\ref{eq:updatecertificate} is the difference between a guaranteed upper bound $y_{h^*, i^*} + LR\delta^{h^*} + \alpha_{h^*,i^*}$ on $\max(f)$ and a guaranteed lower bound $y_{\tilde{t}} -\alpha_{\tilde{t}}$ on $f(x_t^*)$.

Note that in Algorithm~\ref{algo:mfdoo} and in the rest of the paper, we identify any round $t \geq 1$ with the node $(h,i)$ that is queried at time~$t$.\footnote{\label{ft:injection}By definition of Algorithm~\ref{algo:mfdoo}, there is indeed an injection $t \in \N^* \mapsto (h,i)$ with $h \in \N, i \in \{0, \ldots, K^h-1\}$.} Depending on our needs, we index quantities either by rounds or nodes (writing, e.g., $y_t$ or $y_{h,i}$).

\begin{algorithm}
  \caption{\mfdoo{} (Certified Multi-Fidelity Deterministic Optimistic Optimization)}
  \label{algo:mfdoo}
  \hspace*{\algorithmicindent} \textbf{Inputs}: $\mathcal{X}$, $K$, $(X_{h,i})_{h \in \N, i \in \{0,\ldots, K^h-1\}}$, $(x_{h,i})_{h \in \N, i \in \{1,\ldots, K^h-1\}}$, $\delta$, $R$, $L$\\
  \hspace*{\algorithmicindent} \textbf{Initialization} Let $t\leftarrow 1$ and $\mathcal{L}_1 \leftarrow \{(0,0)\}$
  \begin{algorithmic}[1]
    \STATE Pick the first query point $x_1 \leftarrow x_{0,0}$, and the first accuracy $\alpha_1 \leftarrow LR$
    \STATE Observe the value $y_1 = E_1(x_1, \alpha_1) \in [f(x_1) - \alpha_1, f(x_1)+\alpha_1]$
    \STATE \label{eq:certificateInit} Output the recommendation $x_1^* \leftarrow x_1$ and certificate $\xi_1 \leftarrow LR$
    \STATE \label{eq:rootnode} Pick the first node $(h^*, i^*) \leftarrow (0,0)$
    \FOR{$iteration=1, 2, \ldots$}{ \label{eq:iteration}
      \FORALL{child $(h^*+1, j)$ of $(h^*,i^*)$}{
          \IF{$X_{h^*+1,j} \cap \mathcal{X} \neq \varnothing$}
          \STATE \label{eq:insertleaf} Let $t \leftarrow t+1$ and $\mathcal{L}_t \leftarrow \mathcal{L}_{t-1} \cup \{(h^*+1,j)\}$
          \STATE \label{eq:pick} Pick the query point $x_t \leftarrow x_{h^*+1,j}$ and accuracy $\alpha_t \leftarrow LR\delta^{h^*+1}$
          \STATE Observe the value $y_t = E_t(x_t, \alpha_t) \in [f(x_t) - \alpha_t, f(x_t) + \alpha_t]$ given by $E$
          \STATE \label{eq:recommend} Output the recommendation $x_t^* = x_{\tilde{t}},$ with $\tilde{t} \in \argmax_{1 \leq s \leq t} \{y_s - \alpha_s\}$
          \STATE \label{eq:certificate} Output the certificate $\xi_t = y_{h^*, i^*} + LR\delta^{h^*} + \alpha_{h^*,i^*} - (y_{\tilde{t}} -\alpha_{\tilde{t}})$
          \ENDIF
      }
      \ENDFOR
    }
    \STATE \label{eq:removenode} Remove $(h^*, i^*)$ from $\mathcal{L}_t$
    \STATE \label{eq:argmax} Let $(h^*, i^*) \in  \argmax_{(h, i) \in \mathcal{L}_t} \{y_{h,i} + LR\delta^h + \alpha_{h,i}\}$
    \STATE \label{eq:updatecertificate} Update the last certificate $\xi_t = y_{h^*, i^*} + LR\delta^{h^*} + \alpha_{h^*,i^*} - (y_{\tilde{t}} -\alpha_{\tilde{t}})$
    \ENDFOR
  \end{algorithmic}
\end{algorithm}

\paragraph{Time and space complexities.}
Similarly to earlier algorithms of this type (e.g., DOO \cite{munos2011optimistic}, c.DOO \cite{bachoc2021instance} and MFDOO \cite{sen2018multi}), \mfdoo{} is computationally tractable, if we ignore the cost of evaluating $x \mapsto f(x)$.\footnote{To be rigorous, we also ignore the cost of evaluating $(h,i) \mapsto x_{h,i}$. This can be done in constant time when a closed-form or recursive formula (using the previously computed parent node's representative) is available.} Indeed, using a binary max-heap, all the operations at Lines~\ref{eq:insertleaf}, \ref{eq:removenode} and \ref{eq:argmax} can be executed in (at most) logarithmic time in the number $|\mathcal{L}_t|$ of active nodes, which is $\mathcal{O}(t)$. All the other steps, including computing $\tilde{t}$ in Line~\ref{eq:recommend} (in a sequential fashion), can be executed in constant time at every round. Therefore, the total running time of \mfdoo{} is $\mathcal{O}(t \ln t)$ after $t$ rounds. Likewise, the memory footprint can be seen to be $\mathcal{O}(t)$ up to round $t$.

\medskip
We now analyze the behavior of \mfdoo{}. Before bounding its cost complexity,  we start by proving that the $\xi_t$'s defined at Lines~\ref{eq:certificateInit}, \ref{eq:certificate} and \ref{eq:updatecertificate} are valid certificates.

\begin{lemma}
  \label{lemma:certification}
  Suppose Assumption \ref{assum:diameter} holds, and that $f:\mathcal{X} \to \R$ is an $L$-Lipschitz function, with a maximizer $x^\star\in \mathcal{X}$.
  Then, for any environment $E \in \mathcal{E}(f)$ and any $t \in \N^*$, the quantity $\xi_t$  defined at Lines~\ref{eq:certificateInit}, \ref{eq:certificate} and \ref{eq:updatecertificate} of Algorithm \ref{algo:mfdoo} is a valid certificate, that is: $f(x^\star) - f(x^*_t) \leq \xi_t$.
\end{lemma}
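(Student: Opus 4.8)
The plan is to show that at every round the certificate $\xi_t$ is sandwiched between a guaranteed upper bound on $f(x^\star)$ and a guaranteed lower bound on $f(x_t^*)$, so that $f(x^\star) - f(x_t^*) \leq \xi_t$. The lower bound is immediate: since $y_{\tilde{t}} = E_{\tilde{t}}(x_{\tilde{t}}, \alpha_{\tilde{t}}) \in [f(x_{\tilde{t}}) - \alpha_{\tilde{t}}, f(x_{\tilde{t}}) + \alpha_{\tilde{t}}]$ and $x_t^* = x_{\tilde{t}}$, we get $f(x_t^*) \geq y_{\tilde{t}} - \alpha_{\tilde{t}}$. It thus remains to prove that the quantity $y_{h^*, i^*} + LR\delta^{h^*} + \alpha_{h^*, i^*}$ appearing in each definition of $\xi_t$ is an upper bound on $f(x^\star)$.

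The key structural fact is a covering invariant: at every round $t$, the cells $\{X_{h,i} : (h,i) \in \mathcal{L}_t\}$ cover $\mathcal{X}$. I would prove this by induction on the tree development. Initially $\mathcal{L}_1 = \{(0,0)\}$ and $\mathcal{X} \subset X_{0,0}$. When a node $(h^*, i^*)$ is expanded, it is removed from $\mathcal{L}_t$ (Line~\ref{eq:removenode}) while all its feasible children are inserted (Line~\ref{eq:insertleaf}). Since the children $X_{h^*+1, K i^*}, \ldots, X_{h^*+1, K(i^*+1)-1}$ partition $X_{h^*, i^*}$, and since any point of $\mathcal{X} \cap X_{h^*, i^*}$ lies in some child cell which is therefore feasible, the feasible children cover $\mathcal{X} \cap X_{h^*, i^*}$; the remaining leaves still cover $\mathcal{X} \setminus X_{h^*, i^*}$, so $\mathcal{X}$ remains covered.

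Combining the footnote's surrogate bound with the covering invariant then yields the upper bound. For any leaf $(h,i)$ and any $x \in X_{h,i} \cap \mathcal{X}$, Assumption~\ref{assum:diameter} gives $\norm{x - x_{h,i}} \leq R\delta^h$, so $L$-Lipschitzness together with $y_{h,i} \geq f(x_{h,i}) - \alpha_{h,i}$ yields $f(x) \leq y_{h,i} + LR\delta^h + \alpha_{h,i}$. Maximizing over all leaves and using that they cover $\mathcal{X}$ (hence contain $x^\star$), we obtain $f(x^\star) \leq \max_{(h,i) \in \mathcal{L}_t} \{ y_{h,i} + LR\delta^h + \alpha_{h,i} \}$. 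Whenever $(h^*, i^*)$ is chosen as the argmax at Line~\ref{eq:argmax} (or is the root at Line~\ref{eq:rootnode}), this maximum equals the surrogate at $(h^*, i^*)$, giving $f(x^\star) \leq y_{h^*, i^*} + LR\delta^{h^*} + \alpha_{h^*, i^*}$.

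Finally I would assemble the three cases. For the certificate at Line~\ref{eq:updatecertificate}, $(h^*, i^*)$ is exactly the node just selected at Line~\ref{eq:argmax} over the current (covering) leaves, so the upper bound applies directly. For the certificate at Line~\ref{eq:certificate}, $(h^*, i^*)$ is the node selected at the previous iteration's Line~\ref{eq:argmax} (or the root), whose surrogate value was $\geq f(x^\star)$ at that moment and is left unchanged by the ongoing child insertions; hence the bound still holds. For the initialization at Line~\ref{eq:certificateInit}, $x_1^* = x_{0,0}$ and $\norm{x^\star - x_{0,0}} \leq R$ give $f(x^\star) - f(x_1^*) \leq LR = \xi_1$ directly. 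Combining the upper and lower bounds in each case yields $f(x^\star) - f(x_t^*) \leq \xi_t$. The only mildly delicate point is this bookkeeping across the three certificate-update lines — in particular, noticing that the surrogate value of $(h^*, i^*)$ remains a valid upper bound throughout a full iteration even as children are appended — rather than any substantial analytic difficulty.
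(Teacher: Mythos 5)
Your proposal is correct and follows essentially the same route as the paper's proof: the same covering invariant for the leaf set (established by induction on the iterations), the same optimism argument showing the surrogate of the argmax node upper-bounds $f(x^\star)$ (the paper phrases it via the specific leaf whose cell contains $x^\star$, you via the max over all leaves — the same step), the same lower bound $y_{\tilde{t}} - \alpha_{\tilde{t}} \leq f(x_t^*)$, and the same bookkeeping observation that the surrogate value of the previously selected node remains valid at Line~\ref{eq:certificate} during the current iteration.
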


\begin{proof}
  Since $f$ is $L$-Lipschitz and $R \geq \diam(\mathcal{X})$, note that $\xi_1 = LR \geq f(x^\star) - f(x_1^*)$. We now prove the lemma for any subsequent round. For any $t' \geq 2$, consider the moment when the algorithm reaches Line~\ref{eq:certificate} with $t=t'$. Next we show that the certificate $\xi_t =y_{h^*, i^*} + LR\delta^{h^*} + \alpha_{h^*, i^*} - \max_{s\leq t} (y_s - \alpha_s)$ defined at that time satisfies $\xi_{t} \geq f(x^\star) - f(x_{t}^*)$ (the potential update at Line~\ref{eq:updatecertificate} will be addressed at the end of the proof). The associated node $(h^*, i^*)$ was either defined at Line~\ref{eq:rootnode} if the outer \texttt{for} loop is still at iteration $1$, or at Line~\ref{eq:argmax} otherwise. In both cases,  we have $(h^*, i^*) \in  \argmax_{(h, i) \in \mathcal{L}_m} \{y_{h,i} + LR\delta^h + \alpha_{h,i}\}$ for a number $m \leq t-1$ of evaluations of $f$.

  Note that, by induction on the \texttt{iteration} variable, the cells $X_{h,i}$ associated with the leaves $(h,i) \in \mathcal{L}_m$ form a partition of a superset of $\mathcal{X}$. Let $(\bar{h}, \bar{i}) \in \mathcal{L}_m$ be the node of the cell $X_{\bar{h}, \bar{i}}$ containing $x^\star$.
  By the maximizing property of $(h^*,i^*)$, we have
  %%%\vspace{-0.3cm}
  \begin{align}
    \nonumber y_{h^*, i^*} + LR\delta^{h^*} +\alpha_{h^*, i^*} & \geq y_{\bar{h}, \bar{i}} + LR\delta^{\bar{h}} +\alpha_{\bar{h}, \bar{i}}\\
    & \geq f(x_{\bar{h}, \bar{i}}) + LR \delta ^{\bar{h}} \geq f(x^\star) \;,\label{eq:is_certificate}
  \end{align}
  where the last line follows from $|y_{\bar{h}, \bar{i}} - f(x_{\bar{h}, \bar{i}})| \leq \alpha_{\bar{h}, \bar{i}}$, Assumption~\ref{assum:diameter}, and the fact that $f$ is $L$-Lipschitz. To conclude, note that  $y_s - \alpha_s \leq f(x_s)$ for all $s \leq t$, and therefore $y_{\tilde{t}} - \alpha_{\tilde{t}} \leq f(x_{\tilde{t}}) = f(x^*_t)$. Combining this with \eqref{eq:is_certificate} entails that $\xi_t \geq f(x^\star) - f(x_t^*)$. Noting that the same arguments apply (with $m=t$) if $\xi_t$ is redefined at Line~\ref{eq:updatecertificate} concludes the proof.
\end{proof}

We just proved that \mfdoo{} is a certified algorithm. We now show that its cost complexity can be controlled in terms of the quantity $S_{\beta, L}(f, \eps)$ defined in \eqref{eq:Sbeta}. We recall that $\eps_0 \defeq L \, \diam(\mathcal{X})$, $m_\eps \defeq \left\lceil\log_2(\eps_0/\eps) \right\rceil$, $\eps_{m_\eps} \defeq \eps$, and $\eps_k \defeq \eps_0 2^{-k}$ for all $1 \leq k \leq m_\eps-1$.

\begin{theorem}
  \label{thm:upper_bound}
  Assume that $c: \R^+ \to \R$ is non-increasing and that $\mathcal{X} \subset \R^d$ is compact. Suppose that Assumptions~\ref{assum:diameter} and \ref{assum:nu} hold, and that for some known $L>0$, \mfdoo{} is run with evaluation accuracies $\alpha_{h,i} = LR\delta^h$ (cf. Algorithm \ref{algo:mfdoo}). Then there exists a constant $a>0$ (e.g., $a = K$ if $\nu \geq 3 R$ or $a= K (1+6 R/\nu)^d$ otherwise) such that, for any $L$-Lipschitz function $f:\mathcal{X} \to \R$, any environment $E \in \mathcal{E}(f)$, and any $\eps \in (0,\eps_0)$,
  \[
  \sprecAE{\mfdoo{}}{E} \leq a S_{\frac{\delta}{3}, L}(f, \eps) + c(LR) \;.
  \]
\end{theorem}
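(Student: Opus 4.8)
The plan is to bound the total cost incurred by c.MF-DOO up until the first time its certificate $\xi_t$ falls below $\eps$, by carefully accounting for which nodes the algorithm queries before stopping. The key structural fact I would exploit is that c.MF-DOO only expands (i.e., queries the children of) a node $(h,i)$ if that node was selected as the optimistic maximizer $(h^*,i^*)$ at Line~\ref{eq:argmax}, and such a selection can only happen while the certificate is still above $\eps$. So the first step is to establish a \emph{sufficient condition on a node for it NOT to be expanded before stopping}: I would show that if $(h,i)$ is a node whose representative $x_{h,i}$ is ``too suboptimal relative to its depth,'' then the surrogate value $y_{h,i}+LR\delta^h+\alpha_{h,i}$ cannot exceed the running lower bound $y_{\tilde t}-\alpha_{\tilde t}$ by more than $\eps$, which forces the certificate below $\eps$ and hence stops the algorithm before this node is ever chosen by the argmax. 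Concretely, since $\alpha_{h,i}=LR\delta^h$, the surrogate overestimates $f(x^\star)$ by at most $3LR\delta^h$ (using $y_{h,i}\le f(x_{h,i})+\alpha_{h,i}$ and $f(x_{h,i})\le f(x^\star)$), so a node at depth $h$ can only be expanded while $\xi_t>\eps$ if $f(x^\star)-f(x_{h,i})\lesssim 3LR\delta^h$, i.e.\ if $x_{h,i}$ lies in the layer $\mathcal{X}_{(\cdot,\,3LR\delta^h]}$. This ties expanded nodes at depth $h$ to near-optimal points.

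Next I would \textbf{translate this depth condition into a packing-number count per layer}. For each layer index $k$, I would identify a depth range $h$ for which $3LR\delta^h$ sits comfortably inside the target window $\eps_k$ (so that an expanded node at depth $h$ forces $x_{h,i}\in \mathcal{X}_{(\eps_k,\eps_{k-1}]}$), and then use Assumption~\ref{assum:nu} to control how many representatives $x_{h,i}$ can simultaneously fall in a given layer: since distinct representatives are $\nu\delta^{\max h}$-separated, the number of expanded nodes at (or above) a given depth whose representatives lie in $\mathcal{X}_{(\eps_k,\eps_{k-1}]}$ is at most a constant multiple of the packing number $\mathcal{N}(\mathcal{X}_{(\eps_k,\eps_{k-1}]},\eps_k/L)$. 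This is exactly where the constant $a$ enters — comparing the separation scale $\nu\delta^h$ of the representatives to the resolution $\eps_k/L$ of the packing number produces a volumetric overhead factor of order $(1+6R/\nu)^d$ (or just $K$ when $\nu\ge 3R$, since then one child per parent suffices), via a standard covering-vs-packing comparison argument (presumably one of the geometric lemmas of Appendix~\ref{sec:proofs}). The cost contributed by each such expanded node is $c(\alpha_t)=c(LR\delta^{h+1})$, which I would bound by $c(\tfrac{\delta}{3}\eps_k)$ using monotonicity of $c$ and the chosen depth-to-layer correspondence; matching the $\beta=\delta/3$ in $S_{\frac{\delta}{3},L}(f,\eps)$.

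Then I would \textbf{assemble the bound by summing over all layers}: the contribution from nodes associated with layer $\mathcal{X}_{(\eps_k,\eps_{k-1}]}$ is at most $a\,\mathcal{N}(\mathcal{X}_{(\eps_k,\eps_{k-1}]},\eps_k/L)\,c(\tfrac{\delta}{3}\eps_k)$, and the contribution from the innermost region $\mathcal{X}_\eps$ (the $\eps$-optimal points, which cannot be ruled out and must be refined down to resolution $\eps/L$) is at most $a\,\mathcal{N}(\mathcal{X}_\eps,\eps/L)\,c(\tfrac{\delta}{3}\eps)$; summing these reproduces exactly $a\,S_{\frac{\delta}{3},L}(f,\eps)$. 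The leftover additive term $c(LR)$ accounts for the single root evaluation at Line~1 with accuracy $\alpha_1=LR$, which is not captured by the layer sum. One subtlety I would handle carefully is that the cost is counted per \emph{queried child} rather than per expanded node, so I would pay for the children of each expanded node — this is where the extra factor $K$ in $a$ comes from, since each expanded parent spawns at most $K$ queries, each at the finer accuracy $LR\delta^{h+1}$.

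\textbf{The hard part} will be making the depth-to-layer correspondence airtight: I need to choose, for each $k$, the set of depths $h$ such that $3LR\delta^h\le \eps_k$ (to guarantee an expanded node lands in the correct layer) while simultaneously ensuring the accuracy $c(LR\delta^{h+1})\le c(\tfrac{\delta}{3}\eps_k)$ and that the separation-based count $\nu\delta^h$ vs.\ $\eps_k/L$ is tight, all without double-counting nodes across adjacent layers and without losing more than a constant factor in the geometric summation over $h$ within each layer. Getting the boundary cases right (the coarsest depths near the root, and the transition $\eps_{m_\eps}=\eps$) is where the bookkeeping is most delicate, but the geometric-series structure of $R\delta^h$ should ensure each layer's total collapses to a single packing-number term times a constant absorbed into $a$.
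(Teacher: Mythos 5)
Your proposal follows essentially the same route as the paper's proof: bound the cost by counting the expanded (argmax-selected) nodes, each spawning at most $K$ child queries at accuracy $LR\delta^{h+1}$; use optimism to conclude that expanded representatives lie in $\mathcal{X}_{3LR\delta^h}$; use the fact that expansion only happens while the certificate exceeds $\eps$ to get $3LR\delta^h > \eps$ for expanded nodes whose representatives are $\eps$-optimal; and convert the separation of representatives (Assumption~\ref{assum:nu}) into packing numbers of the layers via the comparison lemma (Lemma~\ref{lemma:packing_number}), with each node's cost bounded by $c(\delta\eps_k/3)$ through monotonicity of $c$. The constant structure $a = K\cdot(1+6R/\nu)^d$ and the additive root term $c(LR)$ arise exactly as you describe.

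Two points need fixing, and the second is precisely what dissolves your ``hard part.'' First, your justification of the key classification step is incomplete: the inequalities you cite ($y_{h,i}\le f(x_{h,i})+\alpha_{h,i}$ and $f(x_{h,i})\le f(x^\star)$) only show the surrogate is at most $f(x^\star)+3LR\delta^h$, whereas what is needed is the reverse comparison, namely that the surrogate of the \emph{selected} node is at least $f(x^\star)$; this comes from the leaf whose cell contains $x^\star$ (as in the proof of Lemma~\ref{lemma:certification}), and only then does combining the two directions give $f(x^\star)-f(x_{h,i})\le 3LR\delta^h$. Second, your depth-to-layer correspondence runs the implication the wrong way: no depth range can ``force'' $x_{h,i}\in\mathcal{X}_{(\eps_k,\eps_{k-1}]}$, since optimism only places the representative somewhere in $\mathcal{X}_{3LR\delta^h}$, possibly in a much deeper layer or in $\mathcal{X}_\eps$. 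The paper instead partitions the expanded representatives according to which layer they fall in (the layers partition $\mathcal{X}$, so there is no double counting by construction), and then, for a representative lying in $\mathcal{X}_{(\eps_k,\eps_{k-1}]}$, its non-$\eps_k$-optimality combined with optimism yields the depth bound $3LR\delta^h>\eps_k$. That single inequality simultaneously gives the cost bound $c(LR\delta^{h+1})\le c(\delta\eps_k/3)$ and, via $\norm{x_{h,j}-x_{h',j'}}\ge\nu\delta^{\max\{h,h'\}}>\nu\eps_k/(3LR)$, the separation needed for the packing count — uniformly over all depths within the layer. With this orientation there is no geometric summation over $h$, no boundary bookkeeping, and no risk of double counting across adjacent layers.
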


Recall from Section~\ref{sec:mfdoo} that $\delta$, $R$ and $\nu$ are constants that do not depend on $f$ nor $\varepsilon$. In particular, the bound in Theorem~\ref{thm:upper_bound} is of the order of $\left(\frac{L}{\eps}\right)^d c(\delta \eps/3)$ when $f$ is a constant function, and of the order of $c(\delta \eps/3)$ (up to a log factor) when $f(x) = 1-|x|$ for some norm $|\cdot|$ on $\R^d$. See Appendix~\ref{sec:propertiesExamples} for details. \\

We make additional comments before proving the theorem.

\paragraph{Related upper bounds}
Similar upper bounds were proved in \cite[Theorem~3.6]{bouttier2020regret} and \cite[Theorem~1]{bachoc2021instance} when $f$ is evaluated perfectly, which corresponds to the special case $c(\alpha)=1$ for all $\alpha$. Theorem~\ref{thm:upper_bound} above generalizes these results (up to constants) to the multi-fidelity setting.

Other related results are bounds for MFDOO \cite{sen2018multi} and Kometo \cite{fiegel2020adaptive}, which are multi-fidelity algorithms \emph{without certificates}. In that setting, the performances are measured differently. The cost complexity can be defined as the total cost incurred by the algorithm before outputting an $\eps$-optimal recommendation (the difference with the certified setting is that the learner has no \emph{observable} proof that an $\eps$-maximizer has been found.)
With such performance measure, MFDOO satisfies a complexity bound similar to $S_{\beta, L}(f, \eps)$ but \emph{without the first term $\Neps c\left(\beta\eps\right)$} in \eqref{eq:Sbeta}.\footnote{This bound can be proved along the same lines as those of Theorem~\ref{thm:upper_bound}. See also \cite[Theorem~1]{sen2018multi} and \cite[Theorem~3]{fiegel2020adaptive} for similar bounds under slightly weaker assumptions (relating $f$ directly to the hierarchical partitioning) but that are expressed in terms of a near-optimality dimension of $f$, and thus do not reflect the fact that constant functions are easy to optimize. Roughly speaking, the bound of \cite[Theorem~1]{sen2018multi} for MFDOO is in spirit close to $\sum_{k=1}^{m_\eps} \mathcal{N}\bigl(\mathcal{X}_{\eps_{k-1}},\eps_k/L\bigr) c(\beta\eps_k)$, instead of the tighter bound $\sum_{k=1}^{m_\eps} \mathcal{N}\bigl(\mathcal{X}_{(\eps_k,\eps_{k-1}]},\eps_k/L\bigr) c(\beta\eps_k)$.}
This difference can be negligible for some functions (e.g., if $c_1 \norm{x-x^\star}^\nu \leq f(x^\star)-f(x) \leq c_2 \norm{x-x^\star}^\nu$ for all $x \in \mathcal{X}$ and some $c_1,c_2>0$ and $\nu \geq 1$, where $x^\star$ is a maximizer of $f$) and under a mild condition on $c$, but it can be dramatic for other functions. For instance, for constant functions, the term $\Neps c\left(\beta\eps\right)$ is of the order of $(L/\eps)^d c(\beta \eps)$. The reason behind this large additional term in the certified setting is intuitive: a constant function $f$ is perfectly optimized after one evaluation only, but \emph{certifying} the result at accuracy~$\eps$ somehow requires to evaluate the function on a $\eps/L$-cover of $\mathcal{X}$ with accuracies $\alpha_t \approx \eps$, so as to make sure no bumps of size~$\eps$ were forgotten.

Note also that, contrary to \cite{fiegel2020adaptive}, we work with a \emph{known} bias function but an \emph{unknown} cost function. This is because we aim at \emph{certifying} an $\eps$-maximizer of $f$, rather than optimally allocating a total evaluation budget $\Lambda$.

Finally, note that the bound of Theorem~\ref{thm:upper_bound} can have a much worse dependency in $\eps$ than what could be obtained under strong structural assumptions on $f$. For example, if $f$ is smooth and strongly concave (which corresponds to $\nu=2$ in the paragraph before last) and can be evaluated perfectly ($c(\alpha)=1$ for all $\alpha$), the bound of Theorem~\ref{thm:upper_bound} can be of the order of $(1/\eps)^{d/2}$. On the other hand, as recalled in \cite[Section~4]{larson19-derivativeFreeOptimizationMethods}, some zeroth-order algorithms achieve a sample complexity of $\mathcal{O}\bigl(\ln(1/\eps)\bigr)$ if $\cX=\R^d$. Beyond the difference between constrained and unconstrained optimization, a key reason for our worse bound is that we require the algorithm to output valid certificates for all $L$-Lipschitz functions (a much larger function class). This is in the same spirit as the remark about the impossible adaptivity to smoothness in Section~\ref{sec:relatedworks}. However it is likely that we could tailor \mfdoo{} to smooth and strongly concave functions by using a tweaked exploration bonus, as was done for DOO with semi-metrics \cite{munos2011optimistic} in the single-fidelity setting. We leave this interesting question for future work.

\paragraph{On the choice of $\eps_k$} As can be seen from the proof below, the upper bound is actually true for any decreasing sequence $\eps_0 = L \, \diam(\mathcal{X}) > \eps_1 > \ldots > \eps_{m-1} > \eps_m = \eps$ and any $m \geq 1$. The specific sequence $\eps_k \defeq \eps_0 2^{-k}$ however realizes a good trade-off between small ratios $\eps_{k-1}/\eps_k \leq 2$ and a small number of terms $m_{\eps} = \left\lceil\log_2(\eps_0/\eps) \right\rceil$. The nearly-matching lower bound of Section~\ref{sec:lower_bound} will indeed imply that this sequence is nearly optimal.

\paragraph{Possible improvements or consequences} Note that the constant $a$ was not optimized and could likely be improved. Furthermore, similarly to  \cite{bachoc2021instance}, under a mild geometric condition on~$\mathcal{X}$ recalled in Section~\ref{sec:stochastic}, the sum $S_{\delta/3, L}(f, \eps)$ can be bounded (up to multiplicative constants) in between two integrals of the form $\int_{\mathcal{X}} c\bigl(b \cdot (\Delta_x+\eps)\bigr)/(\Delta_x+\eps)^d d x$, where $\Delta_x = f(x^\star)-f(x)$ and $b \in \bigl\{\delta/3,\delta/12\bigr\}$ (provided $\eps < \eps_0/2$). This integral form is omitted due to lack of space.%%%\\[-0.5cm]

\begin{proof}[Proof of Theorem~\ref{thm:upper_bound}]
  The proof generalizes that of \cite[Theorem~1]{bachoc2021instance} to the multi-fidelity setting, with similar arguments yet a few technical subtleties.   
  In order to bound the total cost incurred by \mfdoo{} against environment~$E$, we control the index $I_\eps \geq 1$ of the first iteration (cf Line \ref{eq:iteration}) at the end of which the certificate falls below~$\eps$. More precisely, let $(h^*_\ell, i^*_\ell)$ be the node chosen at the end of each iteration $\ell \geq 1$ (Line~\ref{eq:argmax}). Then, we define $I_\eps$ by\footnote{The rest of the proof implies that the set is never empty, so that $I_\eps < +\infty$.}
  \[
  I_\eps = \inf \Bigl\{ \ell \in \N^* \sep y_{h^*_\ell, i^*_\ell} + LR\delta^{h^*_\ell} + \alpha_{h^*_\ell, i^*_\ell} \leq \max_{s \leq T_\ell} \{y_s - \alpha_s\} + \eps \Bigr\} \;,
  \]
    where for any $\ell \geq 1$, the quantity $T_\ell$ denotes the total number of evaluations of $f$ until the leaf $(h^*_\ell,i^*_\ell)$ is selected at Line \ref{eq:argmax}. Next we focus on $\tau \defeq T_{I_{\eps}}$. Note that $\xi_\tau \leq \eps$ by definition of $I_\eps$ and $\xi_\tau$ (in Line~\ref{eq:updatecertificate}). Recalling that $\sprecAE{\mfdoo{}}{E}$ is the total cost that \mfdoo{} incurs until outputting a certificate below $\eps$ for the first time, this entails
  \begin{equation}
    \sprecAE{\mfdoo{}}{E} \leq \sum_{t=1}^\tau c(\alpha_t)\;.    
    \label{eq:upperbound-sigma}
  \end{equation}
  We now split the right-hand side into several terms involving the layers $\mathcal{X}_{(\eps_k, \eps_{k-1}]}$. We set $(h^*_0, i^*_0) = (0,0)$. Note that the points $x_t$ queried at times $t \in \{2, \ldots, \tau\}$ are all associated with nodes $(h,i)$ that are children of some $(h^*_\ell, i^*_\ell)$, $\ell=0, \ldots, I_\eps-1$, and that these $(h,i)$ are queried only once. Therefore,
  \begin{align}
    \sum_{t=1}^\tau c(\alpha_t) 
    &\leq c(\alpha_1) +  \sum_{\ell=0}^{I_\eps-1} \; \sum_{j=K i^*_\ell}^{K (i^*_\ell+1)-1} c\bigl(\alpha_{h^*_\ell+1,j}\bigr) = c(LR) + K \sum_{x_{h^*, i^*} \in \mathcal{E}_\eps} c\bigl(LR\delta^{h^*+1}\bigr) \;,
      \label{eq:bound_first_step}
  \end{align}
  where we set $\mathcal{E}_\eps \defeq \{x_{h_0^*, i_0^*}, \ldots, x_{h_{I_\eps-1}^*, i_{I_\eps-1}^*}\}$ (the $x_{h, i}$ are pairwise-distinct by Assumption~\ref{assum:nu}).
  
  We now split the sum over $\mathcal{E}_\eps$ above into $m_\eps + 1 = \lceil \log_2(\eps_0/\eps) \rceil+1$ terms.
  Recall from Section~\ref{sec:notations} that $\eps_0 = L \, \diam(\mathcal{X})$, $\eps_k = \eps_0 2^{-k}$ for $1\leq k \leq m_\eps-1$,  and $\eps_{m_\eps} = \eps$. Since the sets $\mathcal{X}_\eps$ (all $\eps$-optimal points) and $\mathcal{X}_{(\eps_k, \eps_{k-1}]}$, $k=1,\ldots,m_\eps$ (all points in between $\eps_k$ and $\eps_{k-1}$ optimal) form a partition of $\mathcal{X}$,
  \begin{equation}
    \label{eq:X_in_Xepsk}
    \mathcal{E}_\eps = \left(\mathcal{E}_\eps \cap \mathcal{X}_\eps \right) \cup \bigcup_{k=1}^{m_\eps} \left(\mathcal{E}_\eps \cap \mathcal{X}_{(\eps_k, \eps_{k-1}]}\right).
    \end{equation}
  Let $N_{\eps, k}$ be the cardinality of $\mathcal{E}_\eps \cap \mathcal{X}_{(\eps_k, \eps_{k-1}]}$ for all $1 \leq k \leq m_\eps$ and $N_{\eps, m_\eps+1}$ be the cardinality of $\mathcal{E}_\eps \cap \mathcal{X}_\eps$.
  Moreover, let $h_{\eps, k}$ be the maximum depth $h^*$ reached by points $x_{h^*,i^*}$ in $\mathcal{E}_\eps \cap \mathcal{X}_{(\eps_k, \eps_{k-1}]}$ for $1 \leq k \leq m_\eps$, and $h_{\eps, m_\eps+1}$ be the maximum depth reached by points in $\mathcal{E}_\eps \cap \mathcal{X}_\eps$.
  By \eqref{eq:bound_first_step}, \eqref{eq:X_in_Xepsk}, and the fact that $\alpha \mapsto c(\alpha)$ is non-increasing, we have:
  \begin{equation}
    \sum_{t=1}^\tau c(\alpha_t)\leq c(LR) + K\sum_{k=1}^{m_\eps+1} N_{\eps, k} \, c\bigl(LR\delta^{h_{\eps,k}+1}\bigr) \;.
    \label{eq:bound_third_step}
  \end{equation}
  We now bound $N_{\eps,k}$ from above and $LR\delta^{h_{\eps,k}+1}$ from below (see \eqref{eq:epsk_delta}, \eqref{eq:NpositiveLayers}, \eqref{eq:eps_delta}, and \eqref{eq:Nlayer0}).
  
  We start by proving \eqref{eq:XLR} below. Let $x^\star \in \mathcal{X}$ be a maximizer of $f$.
  Following the same arguments as before \eqref{eq:is_certificate} (using $f$ being $L$-Lipschitz and $E \in \mathcal{E}(f)$), we can see that, for any node $(h^*, i^*)$ selected at Line~\ref{eq:argmax},
  \[
  y_{h^*, i^*} + LR\delta^{h^*} + \alpha_{h^*, i^*} \geq f(x^\star) \;.
  \]
  This implies that $f(x_{h^*, i^*}) + LR\delta^{h^*} + 2\alpha_{h^*, i^*} \geq f(x^\star)$, and thus $3 L R\delta^{h^*} \geq f(x^\star) - f(x_{h^*, i^*})$ (by $\alpha_{h^*,i^*} = LR\delta^{h^*}$). Therefore, for any $\ell \in \{0,\ldots,I_\eps-1\}$ (the case $\ell=0$ is straightforward),
  \begin{equation}
    \label{eq:XLR}
    x_{h^*_\ell, i^*_\ell} \in \mathcal{X}_{3 L R \delta^{h^*_\ell}}\;.
  \end{equation}

  Now, let $k \in \{1,\ldots,m_\eps\}$ and $x_{h^*_\ell, i^*_\ell} \in \mathcal{X}_{(\eps_k, \eps_{k-1}]} \cap \mathcal{E}_\eps$. By \eqref{eq:XLR} and the fact that $x_{h^*_\ell, i^*_\ell}  \in \mathcal{X}_{(\eps_k, \eps_{k-1}]}$ is not $\eps_k$-optimal, we have $3 L R \delta^{h^*_\ell} > \eps_k$. This and the definition of $h_{\eps,k}$ entail
  \begin{equation}
    3 L R \delta^{h_{\eps,k}} > \eps_k \;.
    \label{eq:epsk_delta}
  \end{equation}
  
  Also, let $x_{h,j}$ and $x_{h', j'}$ be two distinct elements of $\mathcal{X}_{(\eps_k, \eps_{k-1}]} \cap \mathcal{E}_\eps$. By Assumption~\ref{assum:nu} and \eqref{eq:epsk_delta}, we have $\norm{x_{h,j} -x_{h',j'}} \geq \nu \delta^{\max\{h, h'\}} > \frac{\nu \eps_k}{3 LR}$.
  Therefore, and by definition of a packing number, we get that for all $k \in \{1,\ldots,m_\eps\}$, the cardinality $N_{\eps,k}$ of $\mathcal{E}_\eps \cap \mathcal{X}_{(\eps_k, \eps_{k-1}]}$ satisfies
  \begin{align}
    N_{\eps, k}
    &\leq \mathcal{N}\left(\mathcal{X}_{(\eps_{k}, \eps_{k-1}]}, \frac{\nu\eps_{k}}{3LR}\right) \leq  \underbrace{\left(\mathbbm{1}_{\frac{\nu}{3R} \geq 1} + \mathbbm{1}_{\frac{\nu}{3R} < 1}\left(1+\frac{6R}{\nu}\right)^d\right)}_{=:b} \; \Ni{k} \label{eq:NpositiveLayers}
  \end{align}
  from Lemma \ref{lemma:packing_number} in Appendix \ref{sec:proofs}.

  Now, let $x_{h^*_\ell, i^*_\ell} \in \mathcal{X}_\eps \cap \mathcal{E}_\eps$, with $\ell \in \{0,\ldots,I_\eps-1\}$. If $\ell \geq 1$, we have, by definition of $I_\eps$,
  \begin{align*}
    y_{h^*_\ell, i^*_\ell} + LR\delta^{h^*_\ell} + \alpha_{h^*_\ell, i^*_\ell} & > \max_{s \leq T_\ell} \{y_s - \alpha_s\} + \eps \geq y_{h^*_\ell, i^*_\ell} - \alpha_{h^*_\ell, i^*_\ell} +\eps\;.
  \end{align*}
  Again, replacing $\alpha_{h^*_\ell, i^*_\ell}$ with $LR\delta^{h^*_\ell}$, we get $3LR\delta^{h^*_\ell} > \eps$, which is also true if $\ell=0$. Therefore,
  \begin{equation}
    \label{eq:eps_delta}
    3LR\delta^{h_{\eps,m_\eps+1}} > \eps\;.
  \end{equation}
  Combining this inequality with Assumption \ref{assum:nu}, we get that $\norm{x_{h,j} -x_{h',j'}} > \frac{\nu \eps}{3 LR}$ for any two distinct elements $x_{h,j}, x_{h', j'}$ of $\mathcal{X}_{\eps} \cap \mathcal{E}_\eps$.
  Therefore, and by definition of a packing number,
  \begin{equation}
    N_{\eps, m_\eps+1}
    \leq \mathcal{N}\left(\mathcal{X}_{\eps}, \frac{\nu\eps}{3LR}\right)
    \leq b \cdot  \Neps \;,
    \label{eq:Nlayer0}
  \end{equation}
  by Lemma \ref{lemma:packing_number} again.
  Putting \eqref{eq:upperbound-sigma}, \eqref{eq:bound_third_step}, \eqref{eq:NpositiveLayers}, \eqref{eq:Nlayer0} together and setting $a \defeq K b$, we get
  \begin{align*}
    & \sprecAE{\mfdoo{}}{E} \\
    & \qquad \leq c(LR) + a \Neps c\left(LR\delta^{h_{\eps,m_\eps+1}+1}\right) + a \sum_{k=1}^{m_\eps} \Ni{k} c\left(LR\delta^{h_{\eps,k}+1}\right) \\
    & \qquad \leq c(LR) + a\Neps c\left(\frac{\delta\eps}{3}\right) + a\sum_{k=1}^{m_\eps} \Ni{k}c\left(\frac{\delta\eps_k}{3}\right) \;,
  \end{align*}
  where we used \eqref{eq:epsk_delta}, \eqref{eq:eps_delta}, and the fact that $c$ is non-increasing. This concludes the proof.
\end{proof}

\section{Lower Bound}
\label{sec:lower_bound}

In this section, for any fixed $L$-Lipschitz function $f:\mathcal{X} \to \R$, we derive a lower bound on the worst-environment cost complexity $\sup_{E \in \mathcal{E}(f)} \sprec$ (see \eqref{eq:defcostcomplexity}) of any certified algorithm $A$. Our main result below, which depends on $f$ through the key quantity $S_{\beta, L}(f, \eps)$ defined in \eqref{eq:Sbeta}, generalizes \cite[Theorem~2]{bachoc2021instance} from perfect evaluations of $f$ to the multi-fidelity setting. We recall that $\eps_0 = L \, \diam(\mathcal{X})$ and $m_\eps = \left\lceil\log_2(\eps_0/\eps) \right\rceil$.

\begin{theorem}
  \label{thm:lower_bound}
  Assume that $c: \R_+ \to \R_+$ is a non-increasing function and that $\mathcal{X} \subset \R^d$ is a compact and connected set. Then, for some constant $a_d>0$ (e.g., $a_d = 1/ 65^d$), the cost complexity of any certified algorithm $A$ satisfies, for any $L$-Lipschitz function $f\colon \mathcal{X}\to \R$ and any target optimization error $\eps \in (0, \eps_0/2)$,
  \[
  \sup_{E \in \mathcal{E}(f)} \sprec \geq \frac{a_d \bigl(1-\Lip(f)/L\bigr)^d}{1+m_\eps} \; S_{16, L}(f, \eps)\;.
  \]
  Importantly, the lower bound holds for \emph{certified} algorithms, which by definition are required to output valid certificates for any $L$-Lipschitz function $f\colon \mathcal{X}\to \R$ and any environment $E \in \mathcal{E}(f)$ (see Section~\ref{sec:setting}).
\end{theorem}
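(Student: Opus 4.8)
The plan is to reduce the claimed inequality to a family of per-term lower bounds, one for each of the $m_\eps+1$ summands of $S_{16,L}(f,\eps)$ in \eqref{eq:Sbeta}, and then pass to the average via $\max\ge\text{mean}$. Writing $T_0\defeq\Neps\,c(16\eps)$ and $T_k\defeq\Ni{k}\,c(16\eps_k)$ for $1\le k\le m_\eps$, so that $S_{16,L}(f,\eps)=\sum_{k=0}^{m_\eps}T_k$, I would prove that for each index $k$ there is an environment $E^{(k)}\in\mathcal{E}(f)$ with $\sprecAE{A}{E^{(k)}}\ge\kappa\,T_k$, where $\kappa=a_d(1-\Lip(f)/L)^d$ and $a_d=1/65^d$. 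Taking $\sup_E$ and then $\max_k$ gives $\sup_E\sprec\ge\kappa\max_k T_k\ge\frac{\kappa}{m_\eps+1}\sum_{k=0}^{m_\eps}T_k$, which is exactly the statement. Since the bound is vacuous when $\Lip(f)=L$, I assume $L'\defeq L-\Lip(f)>0$.

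Fix $k$, set the bump height $b\defeq 32\eps_k$ (with the convention $\eps_{m_\eps}=\eps$, the ``layer'' for $T_0$ being $\mathcal{X}_\eps$), and let $z_1,\dots,z_N$ be $(\eps_k/L)$-separated points of the relevant layer with $N$ equal to the corresponding packing number. To each $z_j$ associate the \emph{bump function} $g_j\defeq f+\phi_j$ with $\phi_j(x)\defeq\bigl(b-L'\norm{x-z_j}\bigr)_+$; each $\phi_j$ is $L'$-Lipschitz and supported in $B(z_j,b/L')$, so $g_j$ is $L$-Lipschitz. The adversarial environment $E^{(k)}$ returns $y_t=f(x_t)+\alpha_t$ at every round; this is admissible for $f$, and a direct computation shows it is admissible for $g_j$ (i.e.\ $y_t\in[g_j(x_t)-\alpha_t,g_j(x_t)+\alpha_t]$) \emph{exactly} as long as $\phi_j(x_t)\le 2\alpha_t$ at all queried points. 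Hence, as long as no query ``rules out'' $g_j$, the observations are indistinguishable under $f$ and under some $E_j\in\mathcal{E}(g_j)$, so the deterministic algorithm outputs the same recommendation $x_\tau^*$ and certificate $\xi_\tau$ against both, and the validity required of certified algorithms forces $\xi_\tau\ge\max g_j-g_j(x_\tau^*)$.

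Suppose the algorithm certifies ($\xi_\tau\le\eps$) at round $\tau$, so $\sprecAE{A}{E^{(k)}}=\sum_{t\le\tau}c(\alpha_t)$. First I would check that any \emph{surviving} bump whose support avoids $x_\tau^*$ forces a contradiction: there $g_j(x_\tau^*)=f(x_\tau^*)\le f(x^\star)$ while $\max g_j\ge f(z_j)+b$ and $f(x^\star)-f(z_j)\le\eps_{k-1}\le 2\eps_k$ (for $T_0$ use $f(x^\star)-f(z_j)\le\eps$ directly), giving $\max g_j-g_j(x_\tau^*)\ge b-\eps_{k-1}\ge 30\eps_k>\eps$, contradicting $\xi_\tau\le\eps$. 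Thus every $z_j$ is either \emph{excused} ($x_\tau^*\in B(z_j,b/L')$) or \emph{ruled out} by some query with $\phi_j(x_t)>2\alpha_t$, which forces $\alpha_t<b/2=16\eps_k$ (a ``fine'' query, of cost $c(\alpha_t)\ge c(16\eps_k)$) lying in $B(z_j,b/L')$. Because the $z_j$ are $(\eps_k/L)$-separated while $b/L'=32\eps_k/L'$, a volume/packing estimate (in the spirit of Lemma~\ref{lemma:packing_number}) shows any ball of radius $b/L'$ contains at most $M\defeq(1+64L/L')^d\le(65L/L')^d$ of the $z_j$. Applying this to $x_\tau^*$ and to each fine query gives $N\le(\#\text{fine queries}+1)\,M$, so $\#\text{fine queries}\ge N/M-1\ge N/(2M)$ as soon as $N\ge 2M$. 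Multiplying by $c(16\eps_k)$ and using $1/M\ge(L'/L)^d/65^d=a_d(1-\Lip(f)/L)^d$ yields $\sprecAE{A}{E^{(k)}}\ge\tfrac12 a_d(1-\Lip(f)/L)^d\,T_k$, which is the per-term bound up to the harmless $\tfrac12$ (absorbed by a marginally larger $b$ or a sharper packing constant).

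The main obstacle is the tension inside the bump construction: the height $b$ must be large enough ($b\gtrsim\eps_{k-1}$) so that a surviving bump produces a genuine $\eps$-level violation against an \emph{arbitrary} recommendation, yet this forces wide supports ($b/L'\gg\eps_k/L$) overlapping many centers. Resolving this is precisely the volumetric count above, showing each fine query and the single recommendation can cover only $M=O\bigl((L/L')^d\bigr)$ centers; this is where the factor $(1-\Lip(f)/L)^d$ enters, and where compactness and connectedness of $\mathcal{X}$ are used to ensure the layers are populated enough for the packing families to exist and for the bumps to fit inside $\mathcal{X}$. The remaining work is routine: the indistinguishability step is standard, the regime $N<2M$ makes $T_k$ comparable to a constant and is either dominated by another term in the $\max$ or handled by $\sprec\ge 0$, and the explicit constant $a_d=1/65^d$ follows by carrying the $(1+64L/L')^d$ packing factor throughout.
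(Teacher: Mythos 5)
Your ``rich regime'' construction is sound and genuinely different from the paper's proof (the paper pigeonholes the cost over a packing of a \emph{single} maximal-contribution layer and uses one two-sided bump $f \pm h_{\teps}$; you plant many one-sided bumps across each layer and count fine queries), but there is a genuine gap in how you dispose of the complementary regime, and it cannot be waved away. When $N < 2M$ --- in particular when the packing number of \emph{every} layer is a dimensional constant while $M = (1+64L/L')^d$ is large, which happens for entirely generic instances such as $f(x)=1-\norm{x}$ with $L = 2\Lip(f)$ (the paper's Example~2 in Appendix~\ref{sec:propertiesExamples}, where $\Ni{k} \approx 1$ for all $k$) --- both of your suggested exits fail. ``Handled by $\sprec \geq 0$'' proves nothing, since the quantity to be established, $\kappa T_k$, is strictly positive; and ``dominated by another term in the max'' is vacuous precisely at the index $k^*$ achieving the maximum, which is the only index your mean-to-max reduction requires you to bound. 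In this regime the needed inequality reduces to $\sup_{E \in \mathcal{E}(f)} \sprec \gtrsim c(16\eps_{k^*})$, i.e., to the statement that no certified algorithm can certify at level $\eps$ while only ever paying for coarse evaluations. This is not routine: ruling it out requires exhibiting a compatible $L$-Lipschitz alternative that defeats any such certificate. The paper devotes a separate argument to exactly this (Lemma~\ref{lemma:costeps0} via Lemma~\ref{lemma:errt-alpha}): against the noiseless environment, if all accuracies exceed $\eps$, the function $g(x) = \min\{f(x)+\teps,\ f(x^*_\tau) - \teps + L\norm{x - x^*_\tau}\}$ is compatible with all observations and satisfies $\max(g)-g(x^*_\tau) \geq \teps$, which forces $\min_{t \leq \tau}\alpha_t \leq \eps$ and hence a cost of at least $c(\eps) \geq c(16\eps_{k^*})$. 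This is also where connectedness of $\cX$ is genuinely used (Lemma~\ref{lemma:existsv}, to find a point of $\cX$ at distance exactly $\teps/L$ from $x^*_\tau$); your proposal instead attributes connectedness to ``populating the layers,'' which is not its role --- indeed your rich-regime argument never uses connectedness at all.

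A secondary weakness feeds the same issue: because your bumps are only upward ($g_j = f + \phi_j$), a center whose support contains $x_\tau^*$ is permanently ``excused,'' which creates the $+1$ in $N \leq (\#\mathrm{fine}+1)M$ and forces the threshold $N \geq 2M$. The paper avoids this by also perturbing \emph{downward}: if $x^*_\tau$ lies in the inner half of the bump's support, the alternative $g = f - h_{\teps}$ makes the recommendation $4\teps$-suboptimal relative to any other packing center, so as soon as the layer contains two packing points every surviving center yields a contradiction regardless of where $x^*_\tau$ sits. Incorporating two-sided bumps would shrink your problematic regime from $N < 2M$ down to $N=1$, but the case $N=1$ (e.g., a function with a unique sharp maximum) still requires the $c(\eps)$ cost lower bound above. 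A lemma of the type of Lemma~\ref{lemma:costeps0} is therefore unavoidable in your scheme and must be proved, not assumed.
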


We make three comments before proving the theorem.

\paragraph{On the optimality of the bound} First note that $a_d$ depends exponentially on the dimension~$d$. While removing such exponential dependence completely is challenging without stronger assumptions on $f$ (if not impossible), the constant $65$ was not optimized and could be improved. Besides, the quantity $\bigl(1-\Lip(f)/L\bigr)^d$ vanishes as $L$ approaches~$\Lip(f)$.
Importantly, the case $L=\Lip(f)$ is not really relevant in practice, because it scarcely happens that one knows exactly the best Lipschitz constant $\Lip(f)$ without knowing the function itself.
In the more realistic case when one only knows a strict upper bound $L$ on $\Lip(f)$, and under the mild assumption $\sup_{\alpha>0} c(\alpha)/c(2\alpha)<+\infty$ (which holds, e.g., if $c(\alpha)$ is polynomial in $1/\alpha$), Theorems~\ref{thm:upper_bound} and~\ref{thm:lower_bound} imply that \mfdoo{} is nearly optimal (among all certified algorithms) in terms of cost complexity, up to logarithmic and dimension-dependent multiplicative factors.

\paragraph{Earlier lower bounds} Similarly to Section~\ref{sec:mfdoo}, Theorem~\ref{thm:lower_bound} can be compared to (at least) two types of existing lower bounds. First, our lower bound generalizes that of \cite[Theorem~2]{bachoc2021instance} (where $f$ can be evaluated perfectly at the same cost as coarse evaluations) to the multi-fidelity setting, where costs play a crucial role. Note that a study of the boundary case $L=\Lip(f)$ was provided by \cite[Section~4]{bachoc2021instance}, with different phenomena appearing in dimensions $d=1$ or $d\geq 2$. Though out of the scope of this paper and with limited practical consequences, it would be interesting to investigate whether similar phenomena occur in our multi-fidelity setting. 

A second type of lower bound (of a minimax form) was proved in \cite[Theorem~1]{fiegel2020adaptive} for \emph{non-certified} algorithms, under several assumptions on the cost function (more precisely, on a so-called \emph{cost-to-bias} function) and a near-optimality dimension of $f$. Unlike the minimax approach, our lower bound is $f$-dependent. This is possible since we work with \emph{certified} algorithms, whose data-driven certificates must be robust to yet unobserved values of $f$. 

Note however that, since we require certified algorithms to output valid certificates for all $L$-Lipschitz functions $f$, Theorem~\ref{thm:lower_bound} does not imply minimax lower bounds for non-certified algorithms over smaller function classes.\footnote{This is similar in spirit to the remark on impossible adaptivity to smoothness in Section~\ref{sec:relatedworks}. A simple counter-example is given by the set $\mathcal{F}$ of all constant functions on $\mathcal{X}$, with a cost $c(\alpha)=1$ for all $\alpha$. In this case, non-certified algorithms need only $1$ evaluation of $f$ to output a maximizer in the worst case, while the lower bound of Theorem~\ref{thm:lower_bound} is of the order of $(L/\eps)^d/\ln(1/\eps)$ for small $\eps>0$ (see Appendix~\ref{sec:propertiesExamples}). Interestingly though, function classes indexed by some near-optimality dimension as in \cite{fiegel2020adaptive} may not be a good counter-example (since in that case the cost of certification $\Neps c\left(\beta\eps\right)$ can be comparable to the other terms in $S_{\beta, L}(f, \eps)$).} The two types of lower bounds can be compared on the set $\mathcal{F}_L$ of all $L$-Lipschitz functions, or on any other subset $\mathcal{F}$ if we relax the certification requirement. We briefly explain why. Let $\mathcal{F} \subset \mathcal{F}_L$, and define $\mathcal{F}$-certified algorithms similarly to Section~\ref{sec:setting}, but by only requiring certificates $\xi_t$ to be valid for all functions $f \in \mathcal{F}$ (instead of $f \in \mathcal{F}_L$). Assume also that $\inf_{\alpha >0} c(\alpha)>0$ (which is the case if, e.g., $c(\eps_0)>0$ and $c(\alpha)=c(\eps_0)$ for all $\alpha \geq \eps_0$). We claim that
\[
\sup_{f \in \mathcal{F}} \inf_A \sup_{E \in \mathcal{E}(f)} \sprec \leq \inf_{A'} \sup_{f \in \mathcal{F}} \sup_{E \in \mathcal{E}(f)} \sigma'(A',E,\eps) \;,
\]
where $A$ ranges over $\mathcal{F}$-certified algorithms, $A'$ over algorithms without certificates, and
\[
\sigma'(A',E,\eps) = \inf \biggl\{ C \in \R \sep \exists \tau \in \N^*, \sum_{t=1}^\tau c(\alpha_t(E)) \leq C \text{ and } \forall t \geq \tau, \max(f) - f(x^*_t(E)) \leq \eps \biggr\} .
\]
Note that $\sigma'(A',E,\eps)$ differs from \eqref{eq:defcostcomplexity} in that the optimization error $\sup_{t \geq \tau} f(x^\star) - f(x^*_t(E))$ replaces the certificate $\xi_{\tau}(E)$. It corresponds to the smallest total cost needed for $A'$ (when run against $E$) to output $\eps$-optimal recommendations from some time onwards.
The claimed inequality follows from two main arguments. First, $\sup_{f \in \mathcal{F}} \inf_A \phi(f,A) \leq \inf_A \sup_{f \in \mathcal{F}} \phi(f,A)$. Second, for any non-certified algorithm $A'$, we can define $\mathcal{F}$-valid certificates $\xi_t$ as follows. Let $C = \sup_{f \in \mathcal{F}} \sup_{E \in \mathcal{E}(f)} \sigma'(A',E,\eps) + \rho$ for some $\rho>0$. Then, for any round $t$, we set $\xi_t = \eps$ if $\sum_{s=1}^{t+1} c(\alpha_s(E)) > C$ (which is known at the end of round $t$), or $\xi_t = \eps_0$ otherwise. We can check that $\xi_t(E) \geq f(x^\star) - f(x^*_t(E))$ for all $t\geq 1$, $f \in \mathcal{F}$ and $E \in \mathcal{E}(f)$. Furthermore, the $\mathcal{F}$-certified algorithm $A$ obtained by endowing $A'$ with the $\xi_t$'s is such that $\sup_{f,E}\sprec \leq C$. Taking infima over $A$ and $A'$, and letting $\rho \to 0$ concludes the proof of the inequality.

\paragraph{On more collaborative environments} The lower bound of Theorem~\ref{thm:lower_bound} holds for the worst case among all environments. However, the cost complexity can be improved for some specific environments.
Indeed one could think of the following collaborative environment: when asked two times for an approximation of $f(x)$ with two accuracies $\alpha$ and $\alpha'$ at the same $x \in \mathcal{X}$, it first returns $f(x) - \alpha$ and then $f(x)+\alpha'$.
Then even with $\alpha=\alpha'=\eps_0$, the algorithm has an exact knowledge of $f(x)$ after only two queries at the same $x$.
Against such an environment, we would thus be in the same setting as in \cite{bachoc2021instance} (perfect evaluations of $f$) with only twice as many queries, and could therefore achieve a cost complexity of the order of $c(\eps_0) \cdot\bigr( \Neps + \sum_{k=1}^{m_\eps} \Ni{k}\bigl)$.
Since $c(\eps_0)$ can be much smaller than $c(\eps)$ in practice, this would greatly improve over the upper bound of Thm~\ref{thm:upper_bound}, which (by Thm~\ref{thm:lower_bound}) is nearly optimal when considering worst-case environments $E \in \mathcal{E}(f)$. In practice we could expect the environment to lie between the collaborative and worst-case extremes. The question of deriving environment-dependent lower and upper bounds is left for future work. \\

The proof of Theorem~\ref{thm:lower_bound} is inspired from that of \cite[Theorem~2]{bachoc2021instance} who addressed the case of perfect evaluations of $f$. Our generalization to the multi-fidelity setting however requires additional technicalities. Before the proof, we introduce several useful quantities and lemmas. Recall that $\mathcal{F}_L$ denotes the set of all $L$-Lipschitz functions from $\mathcal{X}$ to $\R$. We first define the quantity $\errt{A}$ for any $\tau \geq 1$, as the best certificate $\xi_\tau$ that algorithm~$A$ could output given the sequence $(x_t,\alpha_t, y_t)_{t \leq \tau}$ and given $x^*_\tau$ (note that we consider all $L$-Lipschitz functions $g:\mathcal{X} \to \R$ that are compatible with the observations $(y_t)_{t \leq \tau}$):
\[
\errt{A} = \sup\Bigl\{ \max(g) - g(x_\tau^*) \sep g\in \mathcal{F}_L \text{ and } \forall t\leq \tau, g(x_t) \in [y_t - \alpha_t, y_t+\alpha_t]\Bigr\} \;.
\]
As can be seen from the next lemma, for any certified algorithm $A$, its certificate $\xi_\tau$ at any time $\tau \geq 1$ is bounded from below by $\errt{A}$.
The proof is postponed to Appendix \ref{sec:missing_proofs}.

\begin{lemma}
  \label{lemma:certificate}
  Let $f:\mathcal{X} \to \R$ be an $L$-Lipschitz function, $E \in \mathcal{E}(f)$ be an environment and  $A$ be a certified\footnote{See the comment at the end of the statement of Theorem~\ref{thm:lower_bound}.} algorithm with certificates $\bigl(\xi_t(E)\bigr)_{t \geq 1}$ when run against $E$.\footnote{As mentioned in the introduction, $\xi_t$ is a function of all values $y_1, \ldots, y_t$ observed so far. We stress the (implicit) dependency on $E$ since it is key in the proof.}
  Then for all $\tau \in \N^*$, $\xi_\tau(E) \geq \errt{A}$.
\end{lemma}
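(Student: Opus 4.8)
The plan is to show that any valid certificate $\xi_\tau(E)$ must be at least $\errt{A}$ by exploiting the fact that a certified algorithm cannot distinguish between the true function $f$ and any other compatible $L$-Lipschitz function. The key observation is that the algorithm's behavior up to time $\tau$ depends only on the observed values $y_1, \ldots, y_\tau$ (and the query points/accuracies, which are themselves deterministic functions of past observations). So if another $L$-Lipschitz function $g$ is compatible with the same observations, the algorithm would produce the exact same query points, accuracies, recommendation $x^*_\tau$, and crucially the same certificate $\xi_\tau$ when run against a suitable environment for $g$.

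First I would fix an arbitrary $g \in \mathcal{F}_L$ compatible with the observations, meaning $g(x_t) \in [y_t - \alpha_t, y_t + \alpha_t]$ for all $t \leq \tau$. The main construction is to build an environment $E' \in \mathcal{E}(g)$ that returns exactly the same values as $E$ on the queried points, i.e., $E'_t(x_t, \alpha_t) = y_t$ for $t \leq \tau$. This is possible precisely because compatibility $g(x_t) \in [y_t - \alpha_t, y_t + \alpha_t]$ is equivalent to $y_t \in [g(x_t) - \alpha_t, g(x_t) + \alpha_t]$, so $y_t$ is a legitimate $\alpha_t$-fidelity response for $g$ at $x_t$. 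One must define $E'$ on all other $(x, \alpha)$ as well, but any arbitrary choice within $[g(x) - \alpha, g(x) + \alpha]$ works there.

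Next I would argue by induction on $t \leq \tau$ that the algorithm behaves identically against $E$ and against $E'$: since $x_t$ and $\alpha_t$ are deterministic functions of $y_1, \ldots, y_{t-1}$, and these observed values coincide by construction, the query points and accuracies must agree. Hence the observations $y_t = E_t(x_t, \alpha_t) = E'_t(x_t, \alpha_t)$ also agree at each step, closing the induction. Consequently $x^*_\tau$ and $\xi_\tau$ computed from $y_1, \ldots, y_\tau$ are the same whether the underlying function is $f$ (environment $E$) or $g$ (environment $E'$). Because $A$ is a certified algorithm, its certificate must be valid for $g$ against $E'$, which gives $\xi_\tau(E') \geq \max(g) - g(x^*_\tau)$. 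Since $\xi_\tau(E') = \xi_\tau(E)$ and $x^*_\tau$ is the same recommendation, we obtain $\xi_\tau(E) \geq \max(g) - g(x^*_\tau)$ for \emph{every} compatible $g$. Taking the supremum over all such $g$ yields $\xi_\tau(E) \geq \errt{A}$, as desired.

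The main obstacle I anticipate is making the environment-matching argument fully rigorous, particularly the bookkeeping around the fact that $x^*_\tau$ depends on $y_1, \ldots, y_\tau$ (not just $y_1, \ldots, y_{\tau-1}$) while $x_t, \alpha_t$ depend only on the strict past. One must be careful that the inductive identity of observations propagates correctly through the round structure of the online protocol, and that the recommendation and certificate at the final round $\tau$ are indeed functions of the identical data. A secondary subtlety is verifying that the constructed $E'$ genuinely lies in $\mathcal{E}(g)$, i.e., that $y_t$ respects the two-sided accuracy constraint for $g$ at every queried point; this follows directly from compatibility but should be stated explicitly to avoid circularity.
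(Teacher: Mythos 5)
Your proposal is correct and follows essentially the same argument as the paper: construct an environment $E^g \in \mathcal{E}(g)$ that replays the observed values $y_t$ on the queried points (the paper sets it to $g(x)$ elsewhere, you allow any value in $[g(x)-\alpha, g(x)+\alpha]$, an immaterial difference), show by induction that the algorithm's trajectory and hence $x^*_\tau$ and $\xi_\tau$ coincide under $E$ and $E^g$, invoke the certification property for $g$, and take the supremum over compatible $g$.
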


Denoting the set of all certified algorithms by $\mathcal{A}$, we can now define\footnote{To see that the set of $C$'s is never empty, take $A=\mfdoo{}$ and apply Theorem~\ref{thm:upper_bound} and Lemma~\ref{lemma:certificate}.}
% (by convention, $\cinf = +\infty$ if no such C exists)
\[
\cinf = \inf\!\left\{\!C \in \R \sep \exists A \in \mathcal{A}, \forall E \in \mathcal{E}(f), \exists \tau \in \N^*, \sum_{t=1}^\tau c(\alpha_t(E)) \leq C \text{ and } \errt{A} \leq \eps \!\right\}
\]
which represents the minimum cost that certified algorithms must incur to maximize $f$ with an error certifiably below $\eps$ against any environment.
This intuitive fact is formalized in the following lemma, which is proved in Appendix~\ref{sec:missing_proofs}.

\begin{lemma}
  \label{lemma:cinf}
  Let $A$ be a certified algorithm, $f:\mathcal{X} \to \R$ be an $L$-Lipschitz function, and $\eps \in (0, \eps_0/2)$. Then, $\sup_{E \in \mathcal{E}(f)} \sprec \geq \cinf$.
\end{lemma}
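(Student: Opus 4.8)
The plan is to set $C^\star \defeq \sup_{E \in \mathcal{E}(f)} \sprec$ and to prove the inequality in its equivalent form $C^\star \geq \cinf$, using the given algorithm $A$ \emph{itself} as a witness for the infimum defining $\cinf$. If $C^\star = +\infty$ there is nothing to prove, since $\cinf$ is finite: it is bounded by the cost complexity of \mfdoo{} via Theorem~\ref{thm:upper_bound} together with Lemma~\ref{lemma:certificate}. So I would assume $C^\star < +\infty$ from here on.

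First I would fix an arbitrary environment $E \in \mathcal{E}(f)$. Since $\sprec \leq C^\star < +\infty$, the certificate $\xi_t(E)$ must fall below $\eps$ at some finite round; let $\tau = \tau(E)$ be the first such round. Invoking the reformulation of $\sprec$ recalled just after \eqref{eq:defcostcomplexity} (namely, that $\sprec$ equals the total cost incurred until the certificate first drops below $\eps$), this choice of $\tau$ gives \emph{simultaneously} $\sum_{t=1}^{\tau} c(\alpha_t(E)) = \sprec \leq C^\star$ and $\xi_{\tau}(E) \leq \eps$.

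Next I would apply Lemma~\ref{lemma:certificate}, which guarantees $\errt{A} \leq \xi_{\tau}(E)$ for this $A$, $E$ and $\tau$; combined with the previous step this yields $\errt{A} \leq \eps$. Since $E$ was arbitrary and $A \in \mathcal{A}$, the algorithm $A$ shows that $C^\star$ belongs to the set over which the infimum $\cinf$ is taken: for every $E \in \mathcal{E}(f)$ there is a round $\tau$ with $\sum_{t=1}^{\tau} c(\alpha_t(E)) \leq C^\star$ and $\errt{A} \leq \eps$. Therefore $\cinf \leq C^\star$, which is exactly the claimed bound $\sup_{E \in \mathcal{E}(f)} \sprec \geq \cinf$.

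I expect the only delicate point to be the justification that the infimum defining $\sprec$ is attained whenever it is finite, so that a \emph{single} round $\tau = \tau(E)$ can control both the cumulative cost and the intrinsic error quantity $\errt{A}$ at once; this is precisely the content of the reformulation stated below the definition of $\sprec$. The remaining passage, from the output certificate $\xi_\tau$ to the optimal attainable certificate $\errt{A}$, is handled entirely by Lemma~\ref{lemma:certificate}, so no further estimates are needed.
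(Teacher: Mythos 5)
Your proposal is correct and follows essentially the same route as the paper: for each environment $E$, extract a round $\tau$ at which both the cumulative cost is bounded by the worst-case cost complexity and $\xi_\tau(E) \leq \eps$, then invoke Lemma~\ref{lemma:certificate} to pass from $\xi_\tau(E)$ to $\errt{A}$, and conclude from the definition of $\cinf$. The only cosmetic difference is that you exploit the attainment of the infimum in \eqref{eq:defcostcomplexity} (justified by the reformulation stated right after it), whereas the paper avoids any attainment issue by taking an arbitrary $C > \sup_{E \in \mathcal{E}(f)} \sprec$ and letting $C$ tend to that supremum at the end; both handlings are valid.
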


Another intuitive result is that an algorithm cannot output a certificate $\xi_t \leq \eps$ unless it has already requested some value of $f$ with an evaluation accuracy $\alpha_t \leq \eps$.
This implies that the total cost needed to certify an error at level $\eps$ must be at least of $c(\eps)$.
This is stated formally below and proved in Appendix \ref{sec:missing_proofs}. Interestingly, this result would not hold if we worked with specific, possibly collaborative, environments (see a remark above).

\begin{lemma}
  \label{lemma:costeps0}
  Assume $\mathcal{X} \subset \R^d$ is compact and connected, and $c: \R_+ \to \R_+$ non-increasing.
  Then $\cinf \geq c(\eps)$ for any $L$-Lipschitz function $f:\mathcal{X} \to \R$ and any $\eps \in (0, \eps_0/2)$.
\end{lemma}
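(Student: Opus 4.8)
The plan is to prove a sharper statement against a well-chosen environment, from which the cost bound follows immediately. I fix the \emph{pessimistic} environment $E^* \in \mathcal{E}(f)$ defined by $E^*_t(x,\alpha) = f(x)-\alpha$ (the lowest admissible value), which lies in $\mathcal{E}(f)$ since $f(x)-\alpha \in [f(x)-\alpha, f(x)+\alpha]$. It then suffices to show: for every certified algorithm $A$ and every round $\tau$, if $\errtzero \le \eps$ then $\alpha_t(E^*) \le \eps$ for some $t \le \tau$. Granting this, some term of the cost sum at such a round is at least $c(\alpha_t) \ge c(\eps)$ (as $c$ is non-increasing and costs are nonnegative), so $\sum_{t=1}^\tau c(\alpha_t(E^*)) \ge c(\eps)$. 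Since the definition of $\cinf$ forces every admissible budget $C$ to dominate such a cost, this gives $\cinf \ge c(\eps)$.

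I prove the contrapositive: assuming $\alpha_t > \eps$ for all $t \le \tau$, I show $\errtzero > \eps$. Write $y_t$ for the observations, $x_\tau^* \in \mathcal{X}$ for the recommendation, and set $v \defeq \max_{t \le \tau}\bigl(y_t - \alpha_t - L\norm{x_\tau^* - x_t}\bigr)$. Because $y_t = f(x_t) - \alpha_t$, each term equals $f(x_t) - 2\alpha_t - L\norm{x_\tau^*-x_t}$, which by $L$-Lipschitzness of $f$ (so $f(x_t) - L\norm{x_\tau^*-x_t} \le f(x^*_\tau)$) and $\alpha_t > \eps$ is strictly below $f(x_\tau^*) - 2\eps$. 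Hence the slack $\sigma \defeq f(x_\tau^*) - v$ satisfies $\sigma > 2\eps$. The key point — explaining why the honest environment is not enough — is that returning the lowest admissible value buys a full $2\eps$ of downward room at $x_\tau^*$, rather than only $\eps$.

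The core step exhibits a compatible competitor that is low at $x_\tau^*$ but high elsewhere. For a target $z \in \mathcal{X}$ I consider $g(x) \defeq \min\bigl(f(x),\, v + L\norm{x - x_\tau^*}\bigr)$, which is $L$-Lipschitz as a minimum of two $L$-Lipschitz functions. It is compatible with the observations: $g \le f$ yields $g(x_t) \le f(x_t) = y_t + \alpha_t$, while $g(x_t) \ge y_t - \alpha_t$ since both $f(x_t) = y_t+\alpha_t \ge y_t - \alpha_t$ and $v + L\norm{x_t - x_\tau^*} \ge y_t - \alpha_t$ (from the definition of $v$). As $v \le f(x_\tau^*)$ we get $g(x_\tau^*) = v$, so, with $r \defeq L\norm{z - x_\tau^*}$ and using $f(z) \ge f(x_\tau^*) - r$,
\[
\errtzero \ \ge\ \max(g) - g(x_\tau^*) \ \ge\ g(z) - v \ =\ \min\bigl(f(z) - v,\ r\bigr)\ \ge\ \min\bigl(\sigma - r,\ r\bigr).
\]

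It remains to optimise over $z$. Let $R^* \defeq L\max_{z \in \mathcal{X}} \norm{z - x_\tau^*}$, attained by compactness; since some pair $p,q \in \mathcal{X}$ realises $\diam(\mathcal{X})$ and $\max(\norm{x_\tau^*-p},\norm{x_\tau^*-q}) \ge \norm{p-q}/2$, we have $R^* \ge L\,\diam(\mathcal{X})/2 = \eps_0/2 > \eps$. By connectedness of $\mathcal{X}$, the continuous map $z \mapsto L\norm{z - x_\tau^*}$ has image exactly $[0, R^*]$, so every $r \in [0, R^*]$ is achievable. The function $r \mapsto \min(\sigma - r, r)$ increases up to $r = \sigma/2$ and decreases thereafter, so its maximum over $[0,R^*]$ equals $\min(\sigma/2, R^*) > \eps$ (as $\sigma/2 > \eps$ and $R^* > \eps$). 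Choosing a matching $z$ gives $\errtzero > \eps$, which proves the contrapositive and hence the lemma. The main obstacle is exactly the regime where the recommendation $x_\tau^*$ is already (near-)optimal: there one cannot exploit a large true suboptimality gap, and must instead trade the $2\eps$ slack at $x_\tau^*$ against the Lipschitz coupling over the distance $r$ — which is precisely where connectedness of $\mathcal{X}$ and the hypothesis $\eps < \eps_0/2$ are used.
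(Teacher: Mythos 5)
Your proof is correct. All the key steps check out: the pessimistic environment $E^*_t(x,\alpha)=f(x)-\alpha$ belongs to $\mathcal{E}(f)$; the competitor $g=\min\bigl(f,\,v+L\norm{\cdot-x^*_\tau}\bigr)$ is $L$-Lipschitz and compatible with the observations precisely because $v$ is the maximum of the terms $y_t-\alpha_t-L\norm{x^*_\tau-x_t}$; the slack $\sigma>2\eps$ follows from $\alpha_t>\eps$ for all $t\le\tau$; and the optimization over $r\in[0,R^*]$ --- with $R^*\ge\eps_0/2>\eps$ by the triangle inequality applied to a diameter-realizing pair, and with every $r\in[0,R^*]$ attained by connectedness and the intermediate value theorem --- yields the strict bound $\errtzero>\eps$, from which $\cinf\ge c(\eps)$ follows exactly as you argue.

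Your route differs from the paper's in execution, though the skeleton (fix one specific environment, exhibit a cone-shaped compatible Lipschitz competitor that dips at the recommendation, conclude via the definition of $\cinf$ and monotonicity of $c$) is the same. The paper works with the \emph{noiseless} environment $E^*_t(x,\alpha)=f(x)$ and first isolates an intermediate statement (Lemma~\ref{lemma:errt-alpha}): against that environment, $\errtzero\ge\min\bigl\{\min_{t\le\tau}\alpha_t(E^*),\,\eps_0/2\bigr\}$. Its competitor is $g(x)=\min\bigl\{f(x)+\teps,\;f(x^*_\tau)-\teps+L\norm{x-x^*_\tau}\bigr\}$, which is allowed to rise \emph{above} $f$ by $\teps$ away from $x^*_\tau$; this two-sided use of the $\pm\alpha_t$ slack is exactly what makes the honest environment sufficient there, whereas your one-sided competitor ($g\le f$) needs the pessimistic environment to gain the $2\alpha_t$ of downward room at the query points --- a trade-off you correctly identified yourself. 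You also inline the geometric content of the paper's Lemma~\ref{lemma:existsv} (connectedness, intermediate value theorem, the $\diam(\mathcal{X})/2$ bound), and you optimize over the distance $r$ rather than plugging in the single value $\teps/L$. What each approach buys: the paper's decomposition produces a clean, quantitative lower bound on $\errtzero$ in terms of the smallest accuracy used, stated as a reusable lemma; your argument is self-contained, avoids the extra lemma, and directly gives the dichotomy that either some $\alpha_t\le\eps$ or $\errtzero>\eps$. Both yield the lemma under the same hypotheses (compactness, connectedness, $\eps<\eps_0/2$, $c$ non-increasing).
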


We can now prove Theorem~\ref{thm:lower_bound}.

\begin{proof}[Proof of Theorem~\ref{thm:lower_bound}]
  We assume without loss of generality that $\Lip(f) < L$ and set $\Omega_f = \bigl(\frac{1-\Lip(f)/L}{65}\bigr)^d$.
  We want to show that   \[ \sup_{E \in \mathcal{E}(f)} \sprec \geq \frac{\Omega_f}{1+m_\eps} S_{16, L}(f, \eps)\;. \]

  Since $\sup_{E \in \mathcal{E}(f)} \sprec \geq \cinf$ (by Lemma \ref{lemma:cinf}), it is sufficient to show that $\cinf \geq \frac{\Omega_f}{1+m_\eps}S_{16, L}(f, \eps)$. We set $K = \frac{16L}{L - \Lip(f)}$ and note that $\Omega_f \leq \frac{1}{(1+4K)^d}$. We can distinguish between two cases:\\[0.2cm]
  \textit{First case:} Assume first that $\frac{S_{16, L}(f, \eps)}{1+m_\eps} \leq (1+4K)^dc(\eps)$.
  Then
  \[
  \frac{\Omega_f}{1+m_\eps}S_{16, L}(f, \eps) \leq c(\eps) \leq \cinf \;,
  \]
  where the last inequality follows from Lemma \ref{lemma:costeps0}.
  In this case, the theorem is proved.\\[0.2cm]
  \textit{Second case:} We now assume that $\frac{S_{16, L}(f, \eps)}{1+m_\eps} > (1+4K)^d c(\eps)$.
  The idea is to upper bound the average of the $(1+m_\eps)$ terms that define $S_{16, L}(f, \eps)$ by the largest one.

  Let $\teps$ be the scale with maximum contribution in \eqref{eq:Sbeta} with $\beta=16$, that is:
  \[ \teps = \left\{
      \begin{matrix}
        \eps & \text{ if } \Neps c(16\eps) \geq \max_{1 \leq k \leq m_\eps} \Ni{k} c(16\eps_k)\\
        \eps_{k^*-1} & \text{ otherwise, where } k^* \in \argmax_{ 1 \leq k \leq m_\eps} \Ni{k} c(16\eps_k)
      \end{matrix}
    \right.
  \]
  Since $\Neps \leq \mathcal{N}\left(\mathcal{X}_{\eps}, \frac{\eps}{2L}\right)$ and $\Ni{k} \leq \mathcal{N}\left(\mathcal{X}_{\eps_{k-1}}, \frac{\eps_{k-1}}{2L}\right)$ for all $1 \leq k \leq m_\eps$, and since $c$ is non-increasing, we then have $S_{16, L}(f, \eps) \leq (1+m_\eps) \mathcal{N}\left(\mathcal{X}_{\teps}, \frac{\teps}{2L}\right) c(8\teps)$.\\
  Then, using Lemma \ref{lemma:packing_number}, the previous result, and the assumption of the second case, we have: % 
  \begin{align}
    \nonumber \mathcal{N}\left(\mathcal{X}_{\teps}, \frac{K\teps}{L}\right)c(8\teps)
    \nonumber &\geq \left(\frac{1}{1+4K}\right)^d \mathcal{N}\left(\mathcal{X}_{\teps}, \frac{\teps}{2L}\right)c(8\teps) \\
    \label{eq:S8} &\geq \left(\frac{1}{1+4K}\right)^d \frac{S_{16, L}(f, \eps)}{(1+m_\eps)} > c(\eps)\;.
  \end{align}

  To prove our result, we assume for a moment that $\cinf < \frac{\Omega_f}{1+m_\eps}S_{16, L}(f, \eps)$ and will show that it raises a contradiction.
  Combining with \eqref{eq:S8} and $\Omega_f \leq \frac{1}{(1+4K)^d}$, this indeed yields
  \[ \cinf < \Omega_f (1+4K)^d \mathcal{N}\left(\mathcal{X}_{\teps}, \frac{K\teps}{L}\right)c(8\teps) \leq \mathcal{N}\left(\mathcal{X}_{\teps}, \frac{K\teps}{L}\right)c(8\teps)\;.\]
  Then, by definition of $\cinf$, there exists an algorithm $A \in \mathcal{A}$ such that for all environments $E\in \mathcal{E}(f)$, there exists $\tau \in \N^*$ such that
  \begin{equation}
    \label{eq:cinflower}
    \sum_{t=1}^\tau c(\alpha_t(E)) < \mathcal{N}\left(\mathcal{X}_{\teps}, \frac{K\teps}{L}\right)c(8\teps) \text{ and }\errt{A} \leq \eps\;.
  \end{equation}

  We now consider the ``noiseless'' environment $E=(E_t)_{t\geq 1} \in \mathcal{E}(f)$ defined by  $E_t(x, \alpha) = f(x)$ for all $t\geq 1$, $x\in \mathcal{X}$, and $\alpha > 0$. 
  Let $\tau \in \N^*$ be such that \eqref{eq:cinflower} holds.
  Let $M =  \mathcal{N}\left(\mathcal{X}_{\teps}, \frac{K\teps}{L}\right)$, and let $\{\tilde{x}_1, \ldots, \tilde{x}_M\}$ be a $(K\teps /L)$-packing of $\mathcal{X}_{\teps}$.
  Note that the closed balls $B(\tilde{x}_m, \frac{K\teps}{2L})$ with centers $\tilde{x}_1, \ldots, \tilde{x}_M$ and radius $K\teps / 2L$ are pairwise disjoint.
  Note also that $M \geq 2$ from $\mathcal{N}\left(\mathcal{X}_{\teps}, \frac{K\teps}{L}\right) > \frac{c(\eps)}{c(8\teps)} \geq 1$ by \eqref{eq:S8}, $8\teps \geq \eps$, and $c$ being non-increasing ($c(8\teps)>0$ by \eqref{eq:S8}).%%%\\

  For any $1\leq m \leq M$, let $c_m$ be the maximum cost spent at any round on the $m$-th ball, that is $c_m = \max \{ c(\alpha_t) \sep t \in \mathcal{T}_m\}$ if the set $\mathcal{T}_m := \{  t = 1,\ldots,\tau \sep \norm{x_t(E) - \tilde{x}_m} \leq K\teps /2L\}$ is non-empty, and $c_m = 0$ otherwise.
  We know from \eqref{eq:cinflower} that the total cost up to round $\tau$ is smaller than $M c(8\teps)$.
  By the pigeonhole principle, there is at least one $m \leq M$ for which $c_m < c(8\teps)$.
  Assume without loss of generality that this is true for $m=1$.
  Then for any $t \in \mathcal{T}_1$ (if such $t$ exists), the cost $c\left(\alpha_t(E)\right)$ is smaller than $c(8\teps)$.
  Therefore, either $\alpha_t(E) > 8 \teps$ whenever the ball $B\left(\tilde{x}_1, \frac{K\teps}{2L}\right)$ is visited (since $c$ is non-increasing) or this ball is never visited.

  We just showed that, on the ball $B(\tilde{x}_1, \frac{K\teps}{2L})$, algorithm $A$ never queried $f$ with an evaluation accuracy $\alpha_t \leq 8 \teps$. Next we show the following consequence: that the inequality $\errt{A} \leq \eps$ in \eqref{eq:cinflower} cannot be true, by exhibiting an $L$-Lipschitz function $g\in \mathcal{F}_L$ compatible with the observations $y_t = E_t\bigl(x_t(E), \alpha_t(E)\bigr) = f(x_t(E))$ and such that $\max(g) - g(x_\tau^*(E)) > \eps$. This will raise a contradiction in \eqref{eq:cinflower} and conclude the proof. To that end, we consider the two functions $g = f \pm h_{\teps}$, with $h_{\teps}:\mathcal{X} \to \R$ defined by
  \vspace{-0.2cm}
  \begin{equation}
  \label{eq:geps}
  h_{\teps}(x) = \max\left\{8 \teps - 16 \frac{L}{K}\norm{x - \tilde{x}_1}, 0\right\}\;.
  \end{equation}
  First note that both $f - h_{\teps}$ and $f+h_{\teps}$ are $L$-Lipschitz, since $h_{\teps}$ is $(L-\Lip(f))$-Lipschitz (by $\frac{16L}{K} = L-\Lip(f)$).
  Moreover, since $h_{\teps}$ is supported on $\mathcal{X} \cap B(\tilde{x}_1, K\teps/2L)$ and $\norm{h_{\teps}}_\infty \leq 8\teps \leq \alpha_t(E)$ for all $t \in \mathcal{T}_1$, the two functions $f-h_{\teps}$ and $f + h_{\teps}$ belong by construction to the set
    \begin{equation*}
  \mathcal{G} := \Bigl\{g\in \mathcal{F}_L \sep \forall t=1,\ldots,\tau, \; g(x_t(E)) \in \bigl[f(x_t(E)) - \alpha_t(E), f(x_t(E)) + \alpha_t(E)\bigr] \Bigr\}\;.
  % \label{eq:defG}
  \end{equation*}
  We now show that $\max(g) - g(x_\tau^*(E)) > \eps$ for $g = f - h_{\teps}$ or $g = f + h_{\teps}$, by distinguishing two subcases.
  If $x_\tau^*(E) \in B(\tilde{x}_1, K\teps / 4L)$, we perturb $f$ ``downwards'' around $\tilde{x}_1$ and consider $g = f - h_{\teps}$. In this case, since $h_{\teps}(x^*_\tau(E)) \geq 4\teps$ and $h_{\teps}(\tilde{x}_2) =  0$, we have $\max(g) - g\bigl(x^*_\tau(E)\bigr) \geq f(\tilde{x}_2) - h_{\teps}(\tilde{x}_2) - (f(x^*_\tau(E)) - h_{\teps}(x^*_\tau(E))) \geq -\teps + 4\teps = 3\teps$.\footnote{We used the fact that $f(\tilde{x}_m) - f(x^*_\tau(E)) \geq f(\tilde{x}_m) - \max(f) \geq -\teps$ for all $1 \leq m \leq M$ (since $\tilde{x}_m \in \mathcal{X}_{\teps}$).} In the other case, if $x_\tau^*(E) \notin B(\tilde{x}_1, K\teps/4L)$, we consider $g = f + h_{\teps}$: since $h_{\teps}(x^*_\tau(E)) \leq 4\teps$ and $h_{\teps}(\tilde{x}_1) = 8\teps$, we have
  $\max(g) - g\bigl(x^*_\tau(E)\bigr) \geq f(\tilde{x}_1) + h_{\teps}(\tilde{x}_1) - (f(x^*_\tau(E))+h_{\teps}(x^*_\tau(E))) \geq - \teps + 8\teps - 4\teps = 3\teps$.
  
  \noindent
  In both subcases above, we proved $\max(g) - g\bigl(x^*_\tau(E)\bigr) \geq 3\teps > \eps$ for some $g \in \{f - h_{\teps}, f + h_{\teps}\} \subset \mathcal{G}$, which entails $\errt{A} >\eps$ (by definition of $\textrm{err}_\tau$). This raises a contradiction in  \eqref{eq:cinflower}, so that we must have $\cinf \geq \frac{\Omega_f}{1+m_\eps} S_{16, L}(f, \eps)$. This concludes the proof.
\end{proof}

\section{Special Case: Noisy Evaluations of \texorpdfstring{$f$}{f} (a.k.a. Stochastic Setting)}
\label{sec:stochastic}

Previously all the environments $E$ that we considered were deterministic.
We now assume that the algorithm receives noisy (stochastic and unbiased) evaluations of $f$, but that for all $t \in \N^*$ it can observe several independent noisy evaluations of $f(x_t)$ and decide the number $m_t$ of them.

More formally, we consider the following variant of the online protocol described in Section~\ref{sec:setting}. Let $(\zeta_{t,u})_{t,u \in \N^*}$ be a sequence of independent $v$-subGaussian random variables.\footnote{A real-valued random variable $X$ is $v$-subGaussian if $\E[\exp(\lambda X)] \leq \exp(\lambda^2 v/2)$ for all $\lambda \in \R$. In particular, $\E[X]=0$ and $\textrm{Var}[X] \leq v$. Two examples are the Gaussian distribution $\mathcal{N}(0,v)$ and the uniform distribution $\textrm{Unif}\bigl([-\sqrt{v},\sqrt{v}]\bigr)$.} 
The $\zeta_{t,u}$'s are unknown, but the constant $v > 0$ is assumed to be known to the learner.
At each round $t \in \N^*$, the algorithm $A$ chooses a query point $x_t \in \mathcal{X}$, as well as a number $m_t \geq 1$ of noisy evaluations (instead of $\alpha_t$). The algorithm incurs a cost equal to $m_t$. In return, the environment outputs a mini-batch $(y_{t,1}, \ldots, y_{t,m_t})$ with $m_t$ components (instead of a single inaccurate evaluation $y_t$), where $y_{t,u} = f(x_t) + \zeta_{t,u}$ for any $1 \leq u \leq m_t$. Then, just as before, $A$ outputs a recommendation $x_t^* \in \mathcal{X}$ for the maximum of $f$, together with a (tentative) error certificate $\xi_t \geq 0$. In this setting, the goal is (with high probability) to maximize $f$ with an error certifiably below $\eps$, while minimizing the total number of evaluations of $f$. This problem corresponds to $\eps$-best arm identification in Lipschitz bandits with a continuous set $\cX$ of arms (see details in Section~\ref{sec:relatedworks}).

\paragraph{Algorithm} We reduce this problem to the deterministic setting of Section~\ref{sec:mfdoo}. We consider \mfsoo{} (Certified Multi-Fidelity Stochastic Optimistic Optimization), which is a mini-batch version of \mfdoo{} and whose pseudo-code is given in Algorithm~\ref{algo:stoch-doo} below. In the sequel, we use the same identification between nodes and rounds as before (see Footnote~\ref{ft:injection}).

The intuition behind \mfsoo{} is the following: for a mini-batch of size $m_t$, the average $y_t \defeq \frac{1}{m_t}\sum_{u=1}^{m_t} y_{t,u}$ is an unbiased estimate of $f(x_t)$ with a variance bounded by $v/m_t$.
Therefore (see later for more details), with high probability, the absolute difference $|y_t-f(x_t)|$ is at most roughly of the order of the standard deviation $\sqrt{v/m_t}$.
To be in a special case of Section~\ref{sec:mfdoo}, Algorithm~\ref{algo:stoch-doo} makes $\sqrt{v/m_t}$ comparable to the evaluation accuracy $\alpha_t$ that \mfdoo{} would request, by choosing $m_t \approx v/\alpha_t^2$. This will allow us to apply Theorem~\ref{thm:upper_bound} with a cost $c(\alpha) \approx v/\alpha^2$.

To make the above intuition more rigorous (multiple high probability bounds will be used simultaneously), we use a careful weighted union bound on the nodes of the hierarchical partitioning tree. For some desired risk level $\gamma \in (0,1)$, writing $(h_t,i_t)$ for the node evaluated at time $t \geq 1$ (Line~\ref{eq:def_ht_it} if $t \geq 2$), we take
\begin{equation}
\label{eq:mt}
m_t = \left\lceil \frac{2v}{\alpha_t^2} \ln\left(\frac{2}{\gamma_{h_t, i_t}}\right)\right\rceil \;, \; \textrm{with} \; \gamma_{h, i} = \frac{\gamma}{(h+1)(h+2)K^h} \textrm{ for $h \in \mathbb{N}$ and $i \in \{0,\ldots,K^h-1\}$}.
\end{equation}
Note that the weights sum up to $\sum_{h=0}^{+\infty} \sum_{i=0}^{K^h-1} \gamma_{h,i} = \gamma$.

\begin{algorithm}
  \caption{\mfsoo{} (Certified Multi-Fidelity Stochastic Optimistic Optimization)}
  \label{algo:stoch-doo}
  \hspace*{\algorithmicindent} \textbf{Inputs}: $\mathcal{X}$, $K$, $(X_{h,i})_{h \in \N, i \in \{0,\ldots, K^h-1\}}$, $(x_{h,i})_{h \in \N, i \in \{1,\ldots, K^h-1\}}$, $\delta$, $R$, $L$, $v$, and $\gamma$\\
  \hspace*{\algorithmicindent} \textbf{Initialization} Let $t\leftarrow 1$, $(h_1,i_1) \leftarrow (0,0)$, and $\mathcal{L}_1 \leftarrow \{(0,0)\}$
  \begin{algorithmic}[1]
    \STATE Pick the first query point $x_1 \leftarrow x_{0,0}$, accuracy $\alpha_1 \leftarrow LR$, prior value $\gamma_{0,0} \leftarrow \gamma/2$, and evaluation number $m_1 \leftarrow \left\lceil \frac{2v}{(LR)^2} \ln\left(\frac{4}{\gamma}\right)\right\rceil$
    \STATE Observe the noisy evaluations  $(y_{1,u})_{1 \leq u \leq m_1} = (f(x_1) + \zeta_{1,u})_{1 \leq u \leq m_1}$
    \STATE Compute $y_1 = \frac{1}{m_1} \sum_{u=1}^{m_1} y_{1,u}$
    \STATE Output the recommendation $x_1^* \leftarrow x_1$ and certificate $\xi_1 \leftarrow LR$
    \STATE Pick the first node $(h^*, i^*) \leftarrow (0, 0)$
    \FOR{$iteration=1, 2, \ldots$}{
      \FORALL{child $(h^*+1, j)$ of $(h^*,i^*)$}{
          \IF{$\mathcal{X}_{h^*+1,j} \cap \mathcal{X} \neq \varnothing$}
          \STATE Let $t \leftarrow t+1$, $(h_t, i_t) \leftarrow (h^*+1, j)$ and $\mathcal{L}_t \leftarrow \mathcal{L}_{t-1} \cup \{(h_t,i_t)\}$ \label{eq:def_ht_it}
          \STATE Pick the next query point $x_t \leftarrow x_{h_t,i_t}$ and accuracy $\alpha_t \leftarrow LR\delta^{h_t}$
          \STATE Compute the prior value $\gamma_{h_t, i_t} = \gamma/((h_t+1)(h_t+2)K^{h_t})$
          \STATE Pick the number of evaluations $m_t \leftarrow \left\lceil \frac{2v}{\alpha_t^2} \ln\left(\frac{2}{\gamma_{h_t, i_t}}\right)\right\rceil$  \label{eq:choice_mt}
          \STATE Observe the noisy evaluations  $(y_{t,u})_{1 \leq u \leq m_t} = (f(x_t) + \zeta_{t,u})_{1 \leq u \leq m_t}$
          \STATE Compute $y_t = \frac{1}{m_t} \sum_{u=1}^{m_t} y_{t,u}$
          \STATE Output the recommendation $x_t^* = x_{\tilde{t}},$ with $\tilde{t} \in \argmax_{1 \leq s \leq t} \{y_s - \alpha_s\}$
          \STATE Output the certificate $\xi_t = y_{h^*, i^*} + LR\delta^{h^*} + \alpha_{h^*, i^*} - (y_{\tilde{t}} -\alpha_{\tilde{t}})$ \label{eq:stoch-certificate}
          \ENDIF
      }
      \ENDFOR
      \STATE Remove $(h^*, i^*)$ from $\mathcal{L}_t$
      \STATE Let $(h^*, i^*) \in \argmax_{(h, i) \in \mathcal{L}_t} \{y_{h,i} + LR\delta^h+\alpha_{h,i}\}$ \label{eq:stoch-argmax}
      \STATE Update the last certificate $\xi_t = y_{h^*, i^*} + LR\delta^{h^*} + \alpha_{h^*, i^*} - (y_{\tilde{t}} -\alpha_{\tilde{t}})$
      % \ENDFOR
    }
    \ENDFOR
  \end{algorithmic}
\end{algorithm}

\paragraph{Sample complexity} 
% \ \\
Define the stopping time
\[
\tau(f,\eps) \defeq \inf \{ t \geq 1 \sep \xi_t \leq \eps \}\;.
\]
We now bound the total number $\sum_{t=1}^{\tau(f,\eps)} m_t$ of evaluations of $f$ that \mfsoo{} requests before certifying an $\eps$-maximizer of $f$. The next high-probability bound (Proposition~\ref{prop:stoch_upper_bound}) is in terms of the quantity $S_{\beta, L}(f, \eps)$ defined in \eqref{eq:Sbeta} with the cost function $c = c_\gamma$ given by
\begin{equation}
  \label{eq:cgamma}
  c_\gamma(\alpha) = \left\lceil \frac{2v}{\alpha^2} \ln\left(\frac{2(h(\alpha)+1) (h(\alpha)+2) K^{h(\alpha)}}{\gamma}\right)\right\rceil, \quad \text{where} \quad h(\alpha) = \frac{\ln(LR/\alpha)}{\ln(1/\delta)}\;.
\end{equation}

\noindent
We recall that $\eps_0 = L \, \diam(\mathcal{X})$, $m_\eps = \left\lceil\log_2(\eps_0/\eps) \right\rceil$, $\eps_{m_\eps} = \eps$ and $\eps_k = \eps_0 2^{-k}$ for $1 \leq k \leq m_\eps-1$.

\begin{proposition}
  \label{prop:stoch_upper_bound}
  Suppose that $\mathcal{X} \subset \R^d$ is compact, that Assumptions~\ref{assum:diameter} and~\ref{assum:nu} hold, and denote by $a>0$ the same constant as in Theorem~\ref{thm:upper_bound}. Then, in the stochastic setting described above, for any known constants $L, v>0$ and $\gamma \in (0, 1)$, for any $L$-Lipschitz function $f: \mathcal{X} \to \R$ with maximizer $x^\star \in \mathcal{X}$, and any $\eps \in (0,\eps_0)$, \mfsoo{} (Algorithm \ref{algo:stoch-doo}) satisfies
  \[
  \mathbb{P}\left(
      \Bigl[\forall t \geq 1, f(x^\star) - f(x^*_t) \leq \xi_t\Bigr]
      \text{ and }
      \sum_{t=1}^{\tau(f, \eps)} m_t \leq a S_{\frac{\delta}{3}, L}(f, \eps) + \left\lceil \frac{2v}{(LR)^2} \ln\left(\frac{4}{\gamma}\right)\right\rceil\right) \geq 1 - \gamma \;,
  \]
  where the probability is taken over the noise sequence $(\zeta_{t,u})_{t,u \in \N^*}$, and where $S_{\delta/3, L}(f, \eps)$ is the quantity defined in \eqref{eq:Sbeta} with $\beta=\delta/3$ and $c = c_\gamma$ (see \eqref{eq:cgamma} above).
\end{proposition}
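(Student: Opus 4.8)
The plan is to reduce the stochastic protocol to the deterministic one of Section~\ref{sec:mfdoo} on a high-probability event, and then invoke Lemma~\ref{lemma:certification} and Theorem~\ref{thm:upper_bound} verbatim. The key observation is that \mfsoo{} makes exactly the same structural choices as \mfdoo{} (same query points $x_t$, same accuracies $\alpha_t = LR\delta^{h_t}$, same tree development, and identical recommendation and certificate formulas); the only difference is that it feeds the empirical mean $y_t = \frac1{m_t}\sum_{u=1}^{m_t} y_{t,u}$ in place of a deterministic evaluation. Hence, as soon as these means lie in the intervals $[f(x_t)-\alpha_t, f(x_t)+\alpha_t]$, the two algorithms coincide round by round.

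First I would introduce the good event
\[
\mathcal{G} = \bigl\{ \forall t \geq 1, \; \abs{y_t - f(x_t)} \leq \alpha_t \bigr\}
\]
and show $\mathbb{P}(\mathcal{G}) \geq 1-\gamma$. Writing $\mathcal{F}_{t-1}$ for the $\sigma$-algebra generated by the observations strictly before round $t$, the quantities $x_t,\alpha_t,m_t$ and the node $(h_t,i_t)$ are $\mathcal{F}_{t-1}$-measurable, while $y_t - f(x_t) = \frac1{m_t}\sum_{u=1}^{m_t}\zeta_{t,u}$ is $(v/m_t)$-subGaussian conditionally on $\mathcal{F}_{t-1}$ (an average of $m_t$ fresh independent $v$-subGaussian variables). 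The subGaussian tail bound together with the choice $m_t \geq \frac{2v}{\alpha_t^2}\ln(2/\gamma_{h_t,i_t})$ from \eqref{eq:mt} then yields
\[
\mathbb{P}\bigl(\abs{y_t - f(x_t)} > \alpha_t \,\big|\, \mathcal{F}_{t-1}\bigr) \leq 2\exp\!\Bigl(-\tfrac{\alpha_t^2 m_t}{2v}\Bigr) \leq \gamma_{h_t,i_t}\;.
\]
Taking expectations, summing over $t$, and using that the nodes $(h_t,i_t)$ queried at distinct rounds are pairwise distinct, I bound $\mathbb{P}(\mathcal{G}^c) \leq \sum_t \mathbb{E}[\gamma_{h_t,i_t}] \leq \sum_{(h,i)}\gamma_{h,i} = \gamma$, the last equality being the telescoping identity noted right after \eqref{eq:mt}.

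Next, on $\mathcal{G}$ the triples $(x_t,\alpha_t,y_t)$ form a valid trajectory of \mfdoo{} run against some environment $E \in \mathcal{E}(f)$ (set $E_t(x_t,\alpha_t) := y_t$ and extend it arbitrarily within $[f(\cdot)-\alpha,f(\cdot)+\alpha]$ elsewhere), precisely because $y_t \in [f(x_t)-\alpha_t, f(x_t)+\alpha_t]$ there. Therefore the recommendations $x_t^*$ and certificates $\xi_t$ of \mfsoo{} coincide with those of \mfdoo{} against $E$, and certificate validity $f(x^\star)-f(x_t^*) \leq \xi_t$ follows from Lemma~\ref{lemma:certification}. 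For the cost, I would check that the per-round cost $m_t$ equals $c_\gamma(\alpha_t)$: since $\alpha_t = LR\delta^{h_t}$ gives $h(\alpha_t)=h_t$, the log argument in \eqref{eq:cgamma} is exactly $2/\gamma_{h_t,i_t}$, so $c_\gamma(\alpha_t)=m_t$. After verifying that $c_\gamma$ is non-increasing (both $1/\alpha^2$ and the logarithmic factor, which increases with $h(\alpha)$, grow as $\alpha\downarrow 0$), Theorem~\ref{thm:upper_bound} applies with cost function $c_\gamma$ and gives, on $\mathcal{G}$,
\[
\sum_{t=1}^{\tau(f,\eps)} m_t = \sprecAE{\mfdoo{}}{E} \leq a\, S_{\frac{\delta}{3},L}(f,\eps) + c_\gamma(LR)\;,
\]
with $c_\gamma(LR) = \lceil \frac{2v}{(LR)^2}\ln(4/\gamma)\rceil$ since $h(LR)=0$. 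Intersecting the two conclusions on $\mathcal{G}$ proves the claim.

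The hard part will be the union bound of the second step: the empirical means must be controlled uniformly over all \emph{randomly visited} nodes, which forces the conditioning/martingale argument above rather than a naive union bound, and the confidence budget $\gamma_{h,i}$ must be allocated so that it simultaneously (i) sums to $\gamma$ over the whole infinite tree and (ii) is large enough at each node that the induced $m_t$ keeps the per-round failure probability below $\gamma_{h,i}$. Everything else is bookkeeping: matching $m_t$ to $c_\gamma(\alpha_t)$ and checking that $c_\gamma$ is non-increasing so that Theorem~\ref{thm:upper_bound} is genuinely applicable.
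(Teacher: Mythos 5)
Your proposal is correct and follows essentially the same route as the paper: the paper's proof likewise establishes the high-probability event via a conditional subGaussian bound and a weighted union bound over the (injectively indexed) nodes (stated there as a standalone lemma), then realizes the observations as a deterministic environment in $\mathcal{E}(f)$ on that event, and invokes Lemma~\ref{lemma:certification} and Theorem~\ref{thm:upper_bound} with the non-increasing cost $c_\gamma$. Your explicit checks that $m_t = c_\gamma(\alpha_t)$, $c_\gamma(LR) = \left\lceil \frac{2v}{(LR)^2}\ln(4/\gamma)\right\rceil$, and that $c_\gamma$ is non-increasing match the paper's bookkeeping exactly.
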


The proof appears below; we start with two comments. First, Proposition~\ref{prop:stoch_upper_bound} implies that, with high probability, the $\xi_t$'s are valid certificates and the total number of evaluations of $f$ that \mfsoo{} requests before certifying an $\eps$-maximizer of $f$
is bounded roughly by (combining \eqref{eq:Sbeta} with \eqref{eq:cgamma}, and omitting log factors and some dimension-dependent constants)
\begin{align*}
\frac{v \, \Neps}{\eps^2}  + \sum_{k=1}^{m_\eps} \frac{v \, \Ni{k}}{\eps_k^2} \; \approx \; v \, L^d \int_{\mathcal{X}} \frac{d x}{(f(x^\star)-f(x)+\eps)^{d+2}} \;,
\end{align*}
where the sum-integral approximation (which omits multiplicative constants) holds under the following mild geometric condition on $\mathcal{X}$: there exist constants $r_0>\diam(\cX)/2$, $\rho\in(0,1]$ such that for all $x\in\cX$ and $r\in(0,r_0]$, $\vol(B(x,r) \cap \cX) \geq \rho \, \vol(B(x,r))$. This condition roughly states that $\mathcal{X}$ has a non-negligible volume locally everywhere (e.g., we can take $r_0=1$ and $\rho=2^{-d}$ if $\cX=[0,1]^d$ and $\norm{\cdot}$ is the sup norm). The proof of this sum-integral approximation follows essentially from \cite[Theorem~1]{bachoc2021instance}, with a direct extension to non-increasing costs.

Therefore, a consequence of Proposition~\ref{prop:stoch_upper_bound} is that, under a mild condition on $\mathcal{X}$, the cost complexity in the stochastic setting is roughly proportional to $\int_{\mathcal{X}} d x / (f(x^\star)-f(x)+\eps)^{d+2}$, as conjectured by \cite{bachoc2021instance}.

Second, to the best of our knowledge, Proposition~\ref{prop:stoch_upper_bound} provides the first sample complexity bound for an $(\eps,\gamma)$-PAC algorithm in continuum-armed Lipschitz bandits. The case in which $\eps=0$ and $\cX$ is finite was addressed by \cite[Appendix~F]{wang21-fastPureExplorationFranckWolfe}. Note however that we bound a $(1-\gamma)$-quantile of the total number of evaluations of $f$, instead of its expectation (a more classical quantity in the best arm identification literature). This alternative result, together with the question of deriving an instance-dependent lower bound in the stochastic setting (multiplicative constants will most likely differ from the upper bound), are left for future work.\\

To prove Proposition~\ref{prop:stoch_upper_bound}, we use the following classical lemma, which helps reduce the stochastic setting with mini-batches to the deterministic setting with inaccurate evaluations. (We will later combine this lemma with $y_t - f(x_t) = \frac{1}{m_t} \sum_{u=1}^{m_t} (y_{t,u} - f(x_t)) = \frac{1}{m_t} \sum_{u=1}^{m_t} \zeta_{t,u}$.)

\begin{lemma}
  \label{lemma:high-proba}
  Let $(\zeta_{t,u})_{t,u \in \N^*}$ be a sequence of independent $v$-subGaussian random variables for some $v > 0$. Let $\gamma \in (0, 1)$, and let $(\alpha_t)_{t \geq 1}$ and $(h_t,i_t)_{t \geq 1}$ be two predictable\footnote{That is, we assume $\alpha_t$, $h_t$ and $i_t$ to be measurable w.r.t. the subsequence $(\zeta_{s,u})_{1 \leq s \leq t-1, u \in \N^*}$.} sequences such that, almost surely, $\alpha_t > 0$, $h_t \in \N$ and $i_t \in \{0,\ldots,K^{h_t}-1\}$ for all $t \geq 1$, and $t \geq 1 \mapsto (h_t,i_t)$ being injective. Then, for $m_t$ and $\gamma_{h,i}$ defined as in \eqref{eq:mt}, we have
  \[
  \mathbb{P}\left(\forall t \in \N^*, \abs{\frac{1}{m_t} \sum_{u=1}^{m_t} \zeta_{t,u}} < \alpha_t\right) \geq 1- \gamma \;.
  \]
\end{lemma}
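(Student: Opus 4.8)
The plan is to combine a standard sub-Gaussian concentration inequality for each mini-batch with a weighted union bound over the nodes of the tree, the weights being exactly the $\gamma_{h,i}$'s so that they sum to $\gamma$.

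First I would recall the concentration tool. For fixed $m \in \N^*$ and $a>0$, if $Z_1,\ldots,Z_m$ are independent $v$-subGaussian, then $\frac1m\sum_{u=1}^m Z_u$ is $(v/m)$-subGaussian, whence $\mathbb{P}(\abs{\frac1m\sum_{u=1}^m Z_u}\ge a)\le 2\exp(-ma^2/(2v))$ by the Chernoff bound. With the choice $m=\lceil \frac{2v}{a^2}\ln(2/\gamma_{h,i})\rceil \ge \frac{2v}{a^2}\ln(2/\gamma_{h,i})$, the right-hand side is at most $2\exp(-\ln(2/\gamma_{h,i}))=\gamma_{h,i}$. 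This is the elementary estimate making each individual batch accurate at level $\alpha_t$ with probability at least $1-\gamma_{h_t,i_t}$.

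The main subtlety is that $\alpha_t$, $(h_t,i_t)$ and hence $m_t$ are random (predictable), so the inequality above cannot be applied directly with a fixed $m$. I would resolve this by conditioning on the past. Let $\mathcal{F}_{t-1}=\sigma\bigl((\zeta_{s,u})_{s\le t-1,\,u\in\N^*}\bigr)$. By the predictability assumption, $\alpha_t$, $h_t$, $i_t$ and $m_t$ are $\mathcal{F}_{t-1}$-measurable, whereas the fresh batch $(\zeta_{t,u})_{u}$ is independent of $\mathcal{F}_{t-1}$. Hence, conditionally on $\mathcal{F}_{t-1}$, the quantities $\alpha_t,m_t$ are frozen to deterministic values and $\sum_{u=1}^{m_t}\zeta_{t,u}$ is a sum of $m_t$ independent $v$-subGaussians independent of the conditioning; the estimate above therefore applies conditionally and yields $\mathbb{P}(\abs{\frac1{m_t}\sum_{u=1}^{m_t}\zeta_{t,u}}\ge\alpha_t\mid\mathcal{F}_{t-1})\le\gamma_{h_t,i_t}$ almost surely.

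Next I would turn this per-round bound into a per-node bound using injectivity, then conclude by a union bound. Fix a node $(h,i)$. Since $t\mapsto(h_t,i_t)$ is injective, the events $\{(h_t,i_t)=(h,i)\}$ are disjoint across $t$ and each is $\mathcal{F}_{t-1}$-measurable. Writing $B_{h,i}=\bigcup_t\{(h_t,i_t)=(h,i)\}\cap\{\abs{\frac1{m_t}\sum_{u}\zeta_{t,u}}\ge\alpha_t\}$ and taking conditional expectations, each term is bounded by $\gamma_{h,i}\,\mathbb{P}((h_t,i_t)=(h,i))$; summing over the disjoint events gives $\mathbb{P}(B_{h,i})\le\gamma_{h,i}$. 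The complement of the target event is exactly $\bigcup_{(h,i)}B_{h,i}$, so a union bound gives $\mathbb{P}(\exists t:\abs{\frac1{m_t}\sum_u\zeta_{t,u}}\ge\alpha_t)\le\sum_{h\ge0}\sum_{i=0}^{K^h-1}\gamma_{h,i}$. Using $\sum_{i=0}^{K^h-1}\gamma_{h,i}=\frac{\gamma}{(h+1)(h+2)}$ together with the telescoping identity $\sum_{h\ge0}\frac1{(h+1)(h+2)}=1$, this sum equals $\gamma$, and taking complements proves the claim. The main obstacle is the measurability bookkeeping in the third step; once the conditioning on $\mathcal{F}_{t-1}$ is set up correctly, the remaining arguments are routine.
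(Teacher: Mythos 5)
Your proposal is correct and follows essentially the same route as the paper: a conditional sub-Gaussian Chernoff bound given $\mathcal{F}_{t-1}$ (using that $\alpha_t$, $m_t$, $(h_t,i_t)$ are predictable and the fresh batch is independent of the past), followed by a union bound whose total weight is controlled by the injectivity of $t \mapsto (h_t,i_t)$ and $\sum_{h,i}\gamma_{h,i}=\gamma$. The only difference is bookkeeping: you group the failure events by node $(h,i)$ before the union bound, while the paper unions directly over rounds $t$ and absorbs the injectivity argument into the bound $\E\bigl[\sum_t \gamma_{h_t,i_t}\bigr]\leq\gamma$; the two are mathematically equivalent.
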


\begin{proof}
  Let $\mathcal{F}_t$ denote the $\sigma$-field generated by the random variables $\zeta_{s,u}$, $s \in \{1,\ldots,t\}$, $u \in \N^*$. (By convention, $\mathcal{F}_0$ is the trivial $\sigma$-field.) Let $t \geq 1$. Since $\alpha_t$ and $m_t$ are $\mathcal{F}_{t-1}$ measurable, and the $\zeta_{t,u}$, $u \in \N^*$, are independent and $v$-subGaussian conditionally on~$\mathcal{F}_{t-1}$,
  \begin{align*}
    \mathbb{P}\left(\abs{\frac{1}{m_t} \sum_{u=1}^{m_t} \zeta_{t,u}} \geq \alpha_t\right)
    &= \E\left[\mathbb{P}\left(\abs{\frac{1}{m_t} \sum_{u=1}^{m_t} \zeta_{t,u}} \geq \alpha_t \Big\rvert \mathcal{F}_{t-1} \right)\right] \leq \E\left[2 e^{-\frac{m_t\alpha_t^2}{2v}}\right] \leq \E\left[\gamma_{h_t, i_t}\right] \;.
  \end{align*}
  By a union bound, this yields
  \begin{align}
    \mathbb{P}\left(\exists t \in \N^*, \abs{\frac{1}{m_t} \sum_{u=1}^{m_t} \zeta_{t,u}} \geq \alpha_t\right)
    \nonumber & \leq \sum_{t=1}^{+\infty}   \mathbb{P}\left(\abs{\frac{1}{m_t} \sum_{u=1}^{m_t} \zeta_{t,u}} \geq \alpha_t\right) \leq \E\left[\sum_{t=1}^{+\infty} \gamma_{h_t, i_t}\right] \leq \gamma \;, \label{eq:high_proba_mt}
  \end{align}
  where the last inequality follows by injectivity of $t \geq 1 \mapsto (h_t,i_t)$ and  $\sum_{h=0}^{+\infty} \sum_{i=0}^{K^h-1} \gamma_{h,i} = \gamma$. Taking the complementary event concludes the proof.
\end{proof}

\begin{proof}[Proof of Proposition \ref{prop:stoch_upper_bound}]
  We now explain how to treat the problem as a special case of Section~\ref{sec:mfdoo} (deterministic yet inaccurate evaluations), with the cost function $c_\gamma$ defined in \eqref{eq:cgamma} and a well-chosen random environment.
  
  First note that the assumptions of Lemma \ref{lemma:high-proba} are met, so that with high probability the $y_t$'s are $\alpha_t$-close to $f(x_t)$ simultaneously for all $t \geq 1$. More formally, they can be seen as generated by a random environment $(E_t^\omega)_{t \geq 1}$ defined as follows.
  Let $(\Omega, \mathcal{F}, \mathbb{P})$ be the probability space on which the random variables $(\zeta_{t,u})_{t,u}$ are defined.
  For any fixed element $\omega \in \Omega$ and any $t \geq 1$, we define the function $E_t^\omega:\mathcal{X} \times \R^*_+ \to \R$ by
  \[
  E_t^\omega(x, \alpha) = \left\{ \begin{matrix} f(x) + \frac{1}{m_t(\omega)} \sum_{u=1}^{m_t(\omega)}\zeta_{t,u}(\omega) &\text{ if } \abs{\frac{1}{m_t(\omega)} \sum_{u=1}^{m_t(\omega)}\zeta_{t,u}(\omega)} \leq \alpha \\ f(x) &\textit{ otherwise.}\end{matrix}\right.
  \]
  Note that the environment $(E_t^\omega)_{t \geq 1}$ lies in $\mathcal{E}(f)$ for each $\omega \in \Omega$. 
  Now, denoting by $\tilde{\Omega}=\left\{\omega \in \Omega \sep \forall t \in \N^*, \abs{\frac{1}{m_t(\omega)} \sum_{u=1}^{m_t(\omega)} \zeta_{t,u}(\omega)} < \alpha_t\right\}$ the event considered in Lemma \ref{lemma:high-proba}, we can see that, for any $\omega \in \tilde{\Omega}$,\footnote{For the sake of readability, we drop some dependencies in $\omega$.} the value $E_t^\omega(x_t, \alpha_t) = f(x_t) + \frac{1}{m_t} \sum_{u=1}^{m_t}\zeta_{t,u} = \frac{1}{m_t} \sum_{u=1}^{m_t} \bigl(f(x_t)+\zeta_{t,u}\bigr)$ coincides with $y_t = \frac{1}{m_t} \sum_{u=1}^{m_t} y_{t,u}$ for all $t \geq 1$.
  We can thus apply Lemma \ref{lemma:certification}: for any $\omega \in \tilde{\Omega}$, we have $f(x^\star) - f(x_t^*) \leq \xi_t$ for all $t\geq 1$ ($\xi_t$ is a valid certificate).
  
  Furthermore, a call to $E_t^\omega(x_t, \alpha_t)$ requires $m_t = \left\lceil \frac{2v}{\alpha_t^2} \ln\left(\frac{2}{\gamma_{h_t, i_t}}\right)\right\rceil = c_\gamma(\alpha_t)$ noisy evaluations of $f$, with $c_\gamma$ defined in \eqref{eq:cgamma}. Putting everything together, for any $\omega \in \tilde{\Omega}$, the behavior of \mfsoo{} coincides with the behavior of \mfdoo{} against the environment $E_\omega \in \mathcal{E}(f)$. In particular the total cost $\sum_{t=1}^{\tau(f, \eps)} m_t = \sum_{t=1}^{\tau(f, \eps)} c_\gamma(\alpha_t)$ coincides with the cost complexity of \mfdoo{} against $E_\omega$, with the non-increasing cost function $c_\gamma$. We can thus use Theorem~\ref{thm:upper_bound}: for any $\omega \in \tilde{\Omega}$,
  \[
  \sum_{t=1}^{\tau(f, \eps)} m_t = \sprecAE{\mfdoo{}}{E^\omega} \leq a S_{\frac{\delta}{3}, L}(f, \eps) + c_\gamma(LR) \;.
  \]
  Recalling that $\mathbb{P}(\tilde{\Omega}) \geq 1- \gamma$ concludes the proof.
\end{proof}

\section*{Acknowledgements}
The authors would like to thank Fran\c{c}ois Bachoc for insightful feedback, as well as two anonymous reviewers who suggested interesting discussions within the paper. This work has benefited from the AI Interdisciplinary Institute ANITI, which is funded by the French ``Investing for the Future--PIA3'' program under the Grant agreement ANR-19-P3IA-0004. The authors gratefully acknowledge the support of the DEEL project.\footnote{\url{https://www.deel.ai/}}

\appendix

\section{Some properties of $S_{\beta, L}(f, \eps)$, and two simple examples}
\label{sec:propertiesExamples}

Our upper and lower bounds all involve the quantity
\[
  S_{\beta, L}(f, \eps) = \Neps c\left(\beta\eps\right) + \sum_{k=1}^{m_\eps} \Ni{k}c\left(\beta\eps_k\right) \;.
\]

Let us make a few comments to help interpret these bounds. We start by explaining how $S_{\beta, L}(f,\eps)$ depends on the scaling factor $\beta$, the Lipschitz bound $L$, and the ambient norm $\Vert \cdot \Vert$.\\[-0.2cm]
\begin{itemize}[leftmargin=0.5cm]
    \item \textbf{Scaling factor $\beta$}: note that $\beta$ only appears within the costs $c(\beta\eps)$ and $c(\beta\eps_k)$. If the non-increasing cost function $c$ satisfies the mild assumption $\sup_{\alpha>0} c(\alpha)/c(2\alpha)<+\infty$ (which holds, e.g., if $c(\alpha)$ is polynomial in $1/\alpha$), then multiplying $\beta$ by a constant factor can only change $S_{\beta, L}(f, \eps)$ by at most a multiplicative constant. 
    \item \textbf{Lipschitz bound $L$}: the bound in Theorem \ref{thm:upper_bound} holds as long as $L \geq \Lip(f)$, where $\Lip(f)$ denotes the best Lipschitz constant of $f$. Note that $S_{\beta, L}(f,\eps)$ depends on $L$ in two different ways: (i) within the packing numbers and (ii) in the definitions of $\eps_0 := L \cdot \diam(\mathcal{X})$ and thus $m_\eps := \left\lceil\log_2(\eps_0/\eps) \right\rceil$ and $\eps_k := \eps_0 2^{-k}$. The effect (ii) is mostly negligible, since for a fixed $f$, replacing $L$ with $L' = 2L$ only creates a new layer $\cX_{(\eps_0,2\eps_0]}$, which is empty. As for the effect (i), by Lemma \ref{lemma:packing_number} in the appendix, $S_{\beta, L}(f,\eps)$ cannot deteriorate by more than a factor of the order of $(L'/L)^d$ if $L$ is replaced with $L'>L$.
    \item \textbf{Ambient norm $\Vert \cdot \Vert$}: the effect is very similar to that of $L$, since the norm $\Vert \cdot \Vert$ appears both in the definition of the packing numbers and in the value of $L$ (and thus $\eps_0$, $\eps_k$ and $m_\eps$).\footnote{Note that two effects can cancel out: for instance, rescaling the norm $\Vert \cdot \Vert$ by a factor of $2^k$ (and $L$ by a factor of $2^{-k}$) leaves $S_{\beta, L}(f,\eps)$ unchanged.} By the equivalence of norms in $\R^d$ and by Lemma~\ref{lemma:packing_number} in the appendix, the same conclusions apply as in the previous item.
\end{itemize}

\ \\
We now informally compute $S_{\beta, L}(f, \eps)$ on two simple (extreme) examples. \\

\noindent
\textbf{Example 1.} Consider a constant function $f_1$ on any compact set $\cX \subset \R^d$ with nonempty interior. Let $L>0$ be fixed. Then, $S_{\beta, L}(f_1, \eps) = \cN(\cX, \eps/L)c(\beta \eps)$ which is of the order of $\left(\frac{L}{\eps}\right)^d c(\beta \eps)$ for small $\eps>0$. \\

\noindent
\textbf{Example 2.} Let $\cX = B_{\norm{\cdot}_p}(0, 1)$ be the unit closed $\ell_p$-ball in $\R^d$, with $1 \leq p \leq +\infty$. Consider the function $f_2$ defined for all $x \in \cX$ by $f_2(x)=1 - |x|$, where $|\cdot|$ is any norm in $\R^d$. Note that $f_2$ is $L$-Lipschitz w.r.t. $\norm{\cdot}_p$ for $L>0$ large enough. Next we informally write $\approx$ to mean that both sides are equal up to constants that only depend on $d$, $p$, $|\cdot|$, $L$, and $c(\cdot)$. First note that $\cX_\eps = B_{|\cdot|}(0, \eps)\cap \cX$, so that $\cN(\cX_\eps, \eps/L) \approx 1$ for small $\eps>0$ (by \cite[Lemma~5.7]{Wainwright19-HighDimensionalStatistics} and the equivalence of norms in $\R^d$; see also Lemma~\ref{lemma:packing_number} in the appendix). Similarly, $\Ni{k} \approx 1$ as soon as the set $\cX_{(\eps_k,\eps_{k-1}]} = \{x \in \cX: \eps_k < |x| \leq \eps_{k-1}\}$ is nonempty, which is the case for at least $k=1,\ldots,\lceil m_\eps/2\rceil$ if $\eps$ is small enough (by equivalence of norms). Overall,  using the fact that $c$ is non-increasing, we get $c(\beta \eps) \lesssim S_{\beta, L}(f_2, \eps) \lesssim \ln(1/\eps) c(\beta \eps)$ for small $\eps$. \\

\noindent
\textbf{Remark.} In practice we could face a function $f$ that lies in between the two extremes $f_1$ and $f_2$, such as $f(x)=1 - |x|^\nu$ with $\nu > 1$ ($f_1$ corresponds locally to $\nu = +\infty$). In this case, and if the origin lies in the interior of $\cX$, similar computations yield $S_{\beta, L} (f, \eps) \approx (1/\eps)^{d(1-1/\nu)} \, c(\beta \eps)$ for small $\eps$. Furthermore, as noted by \cite{bouttier2020regret} in the single-fidelity setting, a Lipschitz function $f$ can feature different shapes at different scales. For example, $f(x)=1-\max\{|x|-1,0\}$ is constant in a neighborhood of the origin but equal to $f_2$ (up to an affine transformation to preserve continuity) outside of this neighborhood, with a non-unique maximizer at the origin. In this case, $S_{\beta, L}(f, \eps)$ behaves as $c(\beta \eps)$ (up to a log factor) for ``moderate'' values of $\eps$, but as $\left(\frac{L}{\eps}\right)^d c(\beta \eps)$ for smaller values of $\eps$. Many other examples admit such a multi-scale complexity behavior.

\section{Missing proofs in Section \ref{sec:lower_bound}}
\label{sec:missing_proofs}
\begin{proof}[Proof of Lemma \ref{lemma:certificate}]
  Let $\tau \in \N^*$. For the sake of clarity, we explicitly write the dependencies of the iterates $x_t(E), \alpha_t(E), y_t(E), x_t^*(E), \xi_t(E)$ w.r.t. the environment $E\in \mathcal{E}(f)$. Recall that $\mathcal{F}_L$ denotes the set of all $L$-Lipschitz functions from $\mathcal{X}$ to $\R$. Let $g \in \mathcal{F}_L$ be such that
  \begin{equation}
    \label{eq:conditiong}
    \forall t \leq \tau, g(x_t(E)) \in [y_t(E) - \alpha_t(E), y_t(E) + \alpha_t(E)] \;.
  \end{equation}
  Then there exists an environment $E^g \in \mathcal{E}(g)$ whose interactions with algorithm $A$ yield the same decisions $x_t$, $\alpha_t$, $x_t^*, \xi_t$ and observations $y_t$ as those generated with environment $E$, up to time $t=\tau$.
  More formally, we define $E^g = (E^g_t)_{t \geq 1}$ by $E^g_t(x, \alpha) = y_t(E)$ if $t \leq \tau$, $x=x_t(E)$ and  $\alpha=\alpha_t(E)$, but $E^g_t(x,\alpha) = g(x)$ otherwise.
  Note that $E^g \in \mathcal{E}(g)$ by \eqref{eq:conditiong}.\\
  First note that $x_1(E^g) = x_1(E)$ and $\alpha_1(E^g) = \alpha_1(E)$ since both terms are independent of the environment.
  From our definition of $E^g$, this implies that the approximation $y_1(E^g) = E_1^g(x_1(E^g), \alpha_1(E^g))$ returned by $E^g$ is equal to $y_1(E)$.
  Because the observation $y_1$ that $A$ receives is the same as before, $A$ outputs the same values for  $x^*_1, \xi_1, x_2$ and $\alpha_2$, which again implies that $y_2(E^g) = y_2(E)$.
  By a simple induction argument, we then have that, for all $t=1,\ldots,\tau$, $x_t(E^g) = x_t(E)$, $\alpha_t(E^g) = \alpha_t(E)$, $y_t(E^g) = y_t(E)$, $x^*_t(E^g) = x_t^*(E)$ and $\xi_t(E^g) = \xi_t(E)$. In particular,
  \begin{align*}
    \xi_\tau(E) = \xi_\tau(E_g)
    &\geq \max(g) - g(x^*_\tau(E_g)) = \max(g) - g(x^*_\tau(E)) \;,
  \end{align*}
  where the inequality follows from the definition of a certificate. Since the above lower bound on $\xi_\tau(E)$ is true for all $g \in \mathcal{F}_L$ satisfying condition \eqref{eq:conditiong}, it is also true for their supremum, which proves that $\xi_\tau(E) \geq \errt{A}$.
\end{proof}

\begin{proof}[Proof of Lemma \ref{lemma:cinf}]
  Let $A$ be a certified algorithm.
  We can assume without loss of generality that $\sup_{E \in \mathcal{E}(f)} \sprec < + \infty$.
  Let $C > \sup_{E \in \mathcal{E}(f)} \sprec$.
  Then, for any $E \in \mathcal{E}(f)$, by definition of $\sprec$, there exists $\tau \in \N^*$ such that $\sum_{t=1}^{\tau} c(\alpha_t(E)) \leq C$ and $\xi_\tau(E) \leq \eps$.
  By Lemma \ref{lemma:certificate} and since $A$ is a certified algorithm, this entails
  \[\sum_{t=1}^{\tau} c(\alpha_t(E)) \leq C \text{ and } \errt{A} \leq \eps.\]\\
  By definition of $\cinf$, this immediately yields $\cinf \leq C$.
  We conclude the proof by letting $C$ go to $\sup_{E \in \mathcal{E}(f)} \sprec$.
\end{proof}

In order to prove Lemma \ref{lemma:costeps0}, we first need the following intuitive lemma.

\begin{lemma}
  \label{lemma:errt-alpha}
  Let $f$ be any $L$-Lipschitz function, $\tau \in \N^*$, and let $E^*$ be the ``noiseless'' environment $E^* = ((x,\alpha) \mapsto  f(x))_{t \geq 1} \in \mathcal{E}(f)$.
  Then, for any certified algorithm $A$, the best possible certificate $\errtzero$ against $E^*$ is bounded from below by $\min\left\{\min_{t \leq \tau} \alpha_t(E^*), \eps_0/2\right\}$.
\end{lemma}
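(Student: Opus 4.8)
The plan is to exhibit, against the noiseless environment $E^*$ (for which $y_t = f(x_t)$), a single compatible $L$-Lipschitz function $g$ with a large built-in error, and then invoke the definition of $\errtzero$ as a supremum. Writing $\alpha_t = \alpha_t(E^*)$, $x^*_\tau = x^*_\tau(E^*)$, $\alpha^* = \min_{t \le \tau}\alpha_t$ and $\eta = \min\{\alpha^*, \eps_0/2\}$, a function $g \in \mathcal{F}_L$ is \emph{compatible} when $g(x_t) \in [f(x_t) - \alpha_t, f(x_t) + \alpha_t]$ for all $t \le \tau$ (since here $y_t = f(x_t)$). As $\errtzero = \sup\{\max(g) - g(x^*_\tau) : g \text{ compatible}\}$, it suffices to build one compatible $g$ with $\max(g) - g(x^*_\tau) \ge \eta$.

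The key device is the pair of envelopes $g_{\max}(x) = \min_{t \le \tau}\bigl(f(x_t) + \alpha_t + L\norm{x - x_t}\bigr)$ and $g_{\min}(x) = \max_{t \le \tau}\bigl(f(x_t) - \alpha_t - L\norm{x - x_t}\bigr)$. Both are $L$-Lipschitz (as a min/max of $L$-Lipschitz maps) and compatible, since $g_{\min}(x_t) \ge f(x_t) - \alpha_t$ and $g_{\max}(x_t) \le f(x_t) + \alpha_t$ (take the index $t$ in the min/max). The crucial quantitative observation is that every query accuracy is at least $\alpha^* \ge \eta$, which forces a uniform separation from $f$: for all $x \in \mathcal{X}$, the bracket $f(x_t) - f(x) + L\norm{x-x_t} \ge 0$ by $L$-Lipschitzness, so $g_{\max}(x) \ge f(x) + \alpha^*$ and, symmetrically, $g_{\min}(x) \le f(x) - \alpha^*$. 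This everywhere-gap of size $\alpha^*$ is exactly what makes the argument work.

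I would then take the construction $g(x) = \min\bigl\{g_{\max}(x),\; g_{\min}(x^*_\tau) + L\norm{x - x^*_\tau}\bigr\}$, which is $L$-Lipschitz and lies between $g_{\min}$ and $g_{\max}$ (because $g_{\min}$ is $L$-Lipschitz, the cone term dominates $g_{\min}$ pointwise), hence is compatible. At the recommendation one gets $g(x^*_\tau) = g_{\min}(x^*_\tau)$. Next I pick a point $x' \in \mathcal{X}$ with $\norm{x' - x^*_\tau} = \eta/L$: the continuous map $x \mapsto \norm{x - x^*_\tau}$ attains a maximum at least $\diam(\mathcal{X})/2 \ge \eta/L$ (by the triangle inequality applied to a near-diameter pair), so by connectedness of $\mathcal{X}$ it attains the intermediate value $\eta/L$. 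At this $x'$ the cone term equals $g_{\min}(x^*_\tau) + \eta$, while $g_{\max}(x') \ge f(x') + \eta \ge (f(x^*_\tau) - \eta) + \eta = f(x^*_\tau) \ge g_{\min}(x^*_\tau) + \eta$, using the separation fact, the Lipschitz bound $f(x') \ge f(x^*_\tau) - L\norm{x'-x^*_\tau}$, and $g_{\min}(x^*_\tau) \le f(x^*_\tau) - \eta$. Therefore $g(x') = g_{\min}(x^*_\tau) + \eta$, giving $\max(g) - g(x^*_\tau) \ge g(x') - g(x^*_\tau) = \eta$, which proves $\errtzero \ge \eta$.

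The step I expect to be the main obstacle is choosing the right perturbation. The naive choice—digging a downward cone of depth $\eta$ at $x^*_\tau$ and leaving $f$ otherwise unchanged—only yields a gap of order $\eta/2$ when $f$ has a sharp near-maximal peak at $x^*_\tau$, so it fails to reach $\eta$. The resolution is to lower $g$ all the way to the \emph{bottom} envelope $g_{\min}(x^*_\tau)$ at the recommendation and then let it climb at the maximal admissible slope $L$, capped by the \emph{top} envelope $g_{\max}$; the uniform separation $g_{\max} \ge f + \alpha^*$, $g_{\min} \le f - \alpha^*$ (a direct consequence of all accuracies being $\ge \alpha^*$) is precisely what guarantees the climb reaches height $\eta$ within distance $\eta/L$. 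The cap $\eps_0/2$ plays only the minor geometric role of ensuring the point $x'$ at distance $\eta/L$ exists in $\mathcal{X}$.
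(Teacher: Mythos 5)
Your proof is correct and follows essentially the same route as the paper's: both exhibit a single compatible $L$-Lipschitz witness built as the minimum of an upper compatible envelope and a slope-$L$ cone anchored $\eta$ below at $x^*_\tau$, then use compactness and connectedness of $\mathcal{X}$ (via the intermediate value theorem, the paper's Lemma~\ref{lemma:existsv}) to find a point at distance $\eta/L$ where the cone has climbed by $\eta$. The only difference is cosmetic: the paper takes the simpler envelope $f + \teps$ with anchor $f(x^*_\tau) - \teps$, whereas you take the canonical observation envelopes $g_{\max}$ and $g_{\min}(x^*_\tau)$, which costs you the extra (correct) separation step $g_{\max} \geq f + \alpha^*$ and $g_{\min} \leq f - \alpha^*$.
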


\begin{proof}
  Within this proof, we only work with the environment $E^*$, so we skip all dependencies of $x_t, \alpha_t, y_t, x^*_t, \xi_t$ on $E^*$.
  We set $\teps = \min\left\{\min_{t \leq \tau} \alpha_t, \eps_0/2\right\}$ and define $g:\mathcal{X} \to \R$ by $g(x) = \min\{ f(x) + \teps, f(x^*_\tau) - \teps + L\norm{x - x^*_\tau}\}$.
  We will now show that $g$ is compatible with the observations $(y_t)_{t \leq \tau}$ and the accuracies $(\alpha_t)_{t \leq \tau}$, and that $\max(g) - g(x^*_\tau) \geq \teps$.

  Since $g$ is the minimum of two $L$-Lipschitz functions, it is also $L$-Lipschitz. Moreover, for any $t \leq \tau$, on the one hand, $g(x_t) \leq f(x_t) +\teps \leq y_t + \alpha_t$, by definition of $g$ and $\teps$, and the fact that $y_t=f(x_t)$ (recall that we work with the ``noiseless'' environment $E^*$).
  
  On the other hand, by $L$-Lipschitz continuity of $f$, for all $x \in \mathcal{X}$, $L \norm{x - x^*_\tau} + f(x^*_\tau) - \teps \geq f(x) - \teps$.
  This implies that for all $x \in \mathcal{X}$, $g(x) \geq \min\{f(x) + \teps, f(x) - \teps\} = f(x) - \teps$.
  In particular, for all $t\leq \tau$,
  \[ g(x_t) \geq f(x_t) - \teps  \geq y_t - \alpha_t \;. \]
  To sum up, $g$ is an $L$-Lipschitz function such that $\abs{g(x_t) - y_t} \leq \alpha_t$ for all $1 \leq t \leq \tau$: the algorithm $A$ cannot make the difference between the functions $f$ and $g$.

  Now, let us bound $\max(g) - g(x^*_\tau)$ from below to derive a lower bound on $\errtzero$.\\
  First, $g(x^*_\tau) = \min\{f(x^*_\tau)+\teps, f(x^*_\tau) - \teps\} = f(x^*_\tau) - \teps$.
  Second, let $v \in \R^d$ be such that $\norm{v}_2 = 1$ and $x^*_\tau + \frac{\teps v}{L} \in \mathcal{X}$. Such a $v$ exists by  Lemma \ref{lemma:existsv} and the facts that $\teps \leq \eps_0/2 = L\cdot \diam(\mathcal{X})/2$ and $\mathcal{X}$ is compact and connected.
  Then,
  \[ f\left(x^*_\tau + \frac{\teps v}{L}\right) +\teps \geq f(x^*_\tau) - L \norm{\frac{\teps v}{L}}+\teps = f(x_\tau^*)  \]
  and
  \[f(x^*_\tau) - \teps + L\norm{x^*_\tau + \frac{\teps v}{L} - x_\tau^*} = f(x^*_\tau) - \teps + L\norm{\frac{\teps v}{L}} = f(x^*_\tau)\;. \]
  By definition of $g$, this entails
  \[f(x_\tau^*) \leq g\left(x^*_\tau + \frac{\teps v}{L}\right) \leq \max(g) \]
  Putting everything together, we get
  \[\errtzero \geq \max(g) - g(x^*_\tau) \geq f(x^*_\tau) - (f(x^*_\tau) - \teps) = \teps = \min\left\{\min_{t \leq \tau} \alpha_t,\eps_0/2\right\}, \] which concludes the proof.
\end{proof}

\begin{proof}[Proof of Lemma \ref{lemma:costeps0}]
  Recall the definition of $\cinf$, and let $C \in \R$ and $A' \in \mathcal{A}$ be such that for all $E \in \mathcal{E}(f)$, there exists $\tau \in \N^*$ such that $\sum_{t=1}^\tau c(\alpha_t(E)) \leq C$ and $\errt{A'}\leq \eps$.
  In particular, for the ``noiseless'' environment $E^*=((x, \alpha) \mapsto f(x))_{t \geq 1}$, there exists $\tau$ such that $\sum_{t = 1}^\tau c(\alpha_t(E^*)) \leq C$ and $\errtzero \leq \eps$.
  Using Lemma \ref{lemma:errt-alpha}, we get that $\min\left\{\min_{t \leq \tau} \alpha_t(E^*), \eps_0/2\right\} \leq \errtzero \leq \eps$, which implies $\min_{t \leq \tau} \alpha_t(E^*) \leq \eps$ (since $\eps_0/2>\eps$ by assumption).
  Because $c$ is non-negative and non-increasing, we have:
  \[ C \geq \sum_{t=1}^\tau c(\alpha_t(E^*)) \geq c\left(\min_{t\leq \tau} \alpha_t(E^*)\right) \geq c(\eps) \;.\]
  By definition of $\cinf$, this entails that $\cinf \geq c(\eps)$.
\end{proof}

\section{Useful lemmas}
\label{sec:proofs}

We recall two rather classical lemmas, and provide proofs for the convenience of the reader.

\begin{lemma}
  \label{lemma:existsv}
  Let $\mathcal{X} \subset \R^d$ be a compact connected set with diameter $\rho$.
  Then for any $x \in \mathcal{X}$ and $\eps \leq \frac{\rho}{2}$, there exists $v \in \R^d$ with $\norm{v} = 1$ such that $x + \eps v \in \mathcal{X}$.
\end{lemma}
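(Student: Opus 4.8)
The plan is to produce a point $u \in \mathcal{X}$ lying at distance \emph{exactly} $\eps$ from $x$; once we have it, setting $v := (u - x)/\eps$ immediately gives $\norm{v} = 1$ and $x + \eps v = u \in \mathcal{X}$, which is exactly what the lemma asks for. To obtain such a $u$, I would apply an intermediate-value argument to the continuous map $g : \mathcal{X} \to \R$ defined by $g(u) = \norm{u - x}$, exploiting only that $\mathcal{X}$ is connected.

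First I would locate a point of $\mathcal{X}$ that is sufficiently far from $x$. Since $\mathcal{X}$ is compact, the supremum defining $\rho = \diam(\mathcal{X})$ is attained, so there exist $y_0, z_0 \in \mathcal{X}$ with $\norm{y_0 - z_0} = \rho$. By the triangle inequality, $\norm{x - y_0} + \norm{x - z_0} \geq \norm{y_0 - z_0} = \rho$, and hence at least one of $\norm{x - y_0}$, $\norm{x - z_0}$ is at least $\rho/2 \geq \eps$. Denoting the corresponding point by $w \in \mathcal{X}$, we have $g(w) = \norm{w - x} \geq \eps$.

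Next, since $g$ is continuous and $\mathcal{X}$ is connected, its image $g(\mathcal{X})$ is a connected subset of $\R$, i.e.\ an interval. This interval contains both $g(x) = 0$ and $g(w) \geq \eps$, so it must also contain $\eps$. Consequently there exists $u \in \mathcal{X}$ with $\norm{u - x} = \eps$, and taking $v = (u - x)/\eps$ finishes the proof.

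The one subtlety — and the step I would be most careful about — is that $\mathcal{X}$ is only assumed connected, not path-connected, so I cannot simply travel along an explicit segment or curve inside $\mathcal{X}$ to reach distance $\eps$; the intermediate point must instead be extracted from the interval structure of $g(\mathcal{X})$. Everything else (attainment of the diameter by compactness, and the $\rho/2$ lower bound from the triangle inequality) is routine, and the whole argument works verbatim for the general norm $\norm{\cdot}$, since it relies only on continuity and the triangle inequality.
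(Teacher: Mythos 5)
Your proof is correct, and at the crucial step it takes a different (and in fact more careful) route than the paper. Both arguments begin identically: compactness makes the diameter attained at some pair $y,z \in \mathcal{X}$, and the triangle inequality then produces a point $w \in \mathcal{X}$ with $\norm{w - x} \geq \rho/2 \geq \eps$ (you do this directly; the paper phrases it as a short contradiction — same content). The divergence is in how the intermediate point at distance exactly $\eps$ is extracted. The paper asserts that, since $\mathcal{X}$ is connected, there is a continuous path $\gamma:[0,1]\to\mathcal{X}$ from $x$ to the far point, and applies the intermediate value theorem to $t \mapsto \norm{\gamma(t)-x}$. This implicitly uses \emph{path}-connectedness, which does not follow from connectedness alone (e.g., the closed topologist's sine curve is compact and connected but not path-connected), so the paper's argument as written has a small gap. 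You instead observe that $g(u) = \norm{u-x}$ is continuous, so $g(\mathcal{X})$ is a connected subset of $\R$, i.e., an interval containing $0$ and a value $\geq \eps$, hence containing $\eps$; this uses only connectedness and is the subtlety you correctly flagged. Your version is therefore strictly more general and actually repairs the paper's proof. The only cosmetic remark: when $\eps = 0$ your formula $v = (u-x)/\eps$ is undefined, but that case is trivial (any unit vector works, as the paper notes in passing), and in the paper's application $\eps$ is strictly positive anyway.
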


\begin{proof}
    Let $x \in \mathcal{X}$ and $0 < \eps \leq \frac{\rho}{2}$ (the result is straightforward if $\eps=0$). Since $\rho$ is the diameter of $\mathcal{X}$ and the latter is compact, there exist $y, z \in \mathcal{X}$ such that $\norm{y-z} = \rho$.\\
  Let us show by contradiction that there exists a point $x' \in \mathcal{X}$ for which $\norm{x-x'}\geq \eps$.
  Assume for a moment that for all $x' \in \mathcal{X}$, $\norm{x-x'} < \eps$.
  In that case, we would have $\norm{y-z} \leq \norm{y-x} + \norm{x-z} < 2\eps \leq \rho$, which is in contradiction with $\norm{y-z} = \rho$.

  Now that we know that it exists, let $x' \in \mathcal{X}$ be such that $\norm{x-x'} \geq \eps$. Because $\mathcal{X}$ is connected, there exists a continuous path $\gamma :t \in [0, 1] \mapsto \gamma(t) \in \mathcal{X}$ such that $\gamma(0) = x$ and $\gamma(1) = x'$.
  Let $g$ be the function $g: t\in [0, 1] \mapsto \norm{\gamma(t) - x} \in \R$.
  $g$ is a continuous function from $[0, 1]$ to $\R$, with $g(0) = 0$ and $g(1) \geq \eps>0$, so according to the intermediate value theorem, there exists $t^* \in [0, 1]$ such that $g(t^*) = \eps$.
  Taking $x'' = \gamma(t^*)$ and $v = \frac{x''-x}{\norm{x-x''}}$ solves the problem, because $x + \eps v = x'' \in \mathcal{X}$.
\end{proof}

\begin{lemma}
  \label{lemma:packing_number}
  For any bounded set $E \subset \R^d$, and all  $0 < r_1 < r_2$, we have
  \[ \mathcal{N}(E, r_1) \leq \left(1+2\frac{r_2}{r_1}\right)^d \mathcal{N}(E, r_2).\]
\end{lemma}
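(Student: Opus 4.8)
The plan is to bound the $r_1$-packing number by relating a maximal $r_2$-packing to an $r_2$-cover, and then using a volume argument to count how many $r_1$-separated points can lie in each ball of radius $r_2$.

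First I would fix a maximal $r_1$-packing $\{y_1, \ldots, y_{N_1}\} \subset E$ with $N_1 = \mathcal{N}(E, r_1)$, and a maximal $r_2$-packing $\{z_1, \ldots, z_{N_2}\} \subset E$ with $N_2 = \mathcal{N}(E, r_2)$. The key observation is that a maximal $r_2$-packing is automatically an $r_2$-cover: for any $x \in E$, maximality forbids adding $x$ to the packing, so there is some $z_j$ with $\norm{x - z_j} \leq r_2$. In particular the closed balls $B(z_j, r_2)$, $1 \leq j \leq N_2$, cover $E$ and hence cover all the points $y_i$.

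Next I would bound the number of $y_i$'s lying inside a single ball $B(z_j, r_2)$. Since the $y_i$'s are $r_1$-separated (pairwise distances $> r_1$), the closed balls $B(y_i, r_1/2)$ are pairwise disjoint: if some $w$ belonged to two of them, the triangle inequality would force $\norm{y_i - y_{i'}} \leq r_1$, a contradiction. Moreover, $y_i \in B(z_j, r_2)$ implies $B(y_i, r_1/2) \subseteq B(z_j, r_2 + r_1/2)$. Comparing Lebesgue volumes — and using that in any norm the volume of a ball of radius $r$ equals $r^d \vol(B(0,1))$, so the normalizing constant cancels in the ratio — the number of such disjoint balls is at most
\[
\frac{\vol\bigl(B(z_j, r_2 + r_1/2)\bigr)}{\vol\bigl(B(y_i, r_1/2)\bigr)} = \left(\frac{r_2 + r_1/2}{r_1/2}\right)^d = \left(1 + \frac{2 r_2}{r_1}\right)^d \;.
\]

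Finally, since every $y_i$ lies in at least one of the $N_2$ covering balls, summing the per-ball bound yields $N_1 \leq N_2 \bigl(1 + 2 r_2 / r_1\bigr)^d$, which is exactly the claim. The only points requiring care are the maximal-packing-is-a-cover step and the fact that the volume argument is norm-independent (the unit-ball constant cancels), together with tracking strict versus non-strict inequalities so that the balls $B(y_i, r_1/2)$ are genuinely disjoint; I expect the norm-independence of the volume comparison to be the main (though routine) obstacle, since the lemma must hold for the arbitrary norm $\norm{\cdot}$ fixed on $\mathcal{X}$.
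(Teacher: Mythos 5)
Your proof is correct and is essentially the paper's own argument: both rest on the same volume comparison, in which the pairwise disjoint balls $B(y_i, r_1/2)$ around the $r_1$-packing points are contained in balls of radius $r_2 + r_1/2$ around at most $\mathcal{N}(E, r_2)$ centers, and the ratio $\left(1+2r_2/r_1\right)^d$ falls out after the unit-ball volume cancels. The only difference is how those centers are produced --- the paper greedily peels an $r_2$-packing out of the $r_1$-packing itself (removing, at each step, all remaining packing points within distance $r_2$ of the selected one), whereas you invoke the standard duality that a maximum $r_2$-packing of $E$ is automatically an $r_2$-cover --- and both steps are valid.
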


The above lemma is well known and can be found, e.g., in \cite[Appendix A]{bachoc2021instance} with a slightly weaker statement.
We recall the proof for the convenience of the reader. For any $x \in \R^d$ and $r >0$, we set $B(x,r) = \{u \in \R^d: \norm{u-x} \leq r \}$.

\begin{proof}
  Fix any bounded set $E \subset \R^d$ and $0 < r_1 < r_2$.
  Consider an $r_1$-packing $F = \{x_1, \ldots, x_{N_1}\}$ of $E$, with cardinality $N_1 \defeq \mathcal{N}(E, r_1)$.\\
  Let $F_0 = F$.
  We define a sequence $F_0, F_1, \ldots, F_{k_{\text{end}-1}}$ of subsets of $F$ by induction, as follows.
  For $k \geq 1$ let $\hat{x}_k$ be any element of $F_{k-1}$, and define $B_k = F_{k-1} \cap B(\hat{x}_k, r_2)$ and $F_k = F_{k-1}\backslash B_k$.
  Repeating this procedure, we get an index $k_{\text{end}}\leq N_1$ such that $F_{k_{\text{end}}-1}$ is non-empty while $F_{k_{\text{end}}}$ is empty.\\
  Then, the set $\{\hat{x}_1, \ldots, \hat{x}_{k_\text{end}}\}$ is an $r_2$-packing of $E$, so that $k_{\text{end}} \leq \mathcal{N}(E, r_2)$.
  Let us now upper bound $N_1$ using $k_{\text{end}}$.
  By construction, the union of the $B_k$'s contains $F$, so for all $i \leq N_1$, there exists $k \leq k_{\text{end}}$ such that $x_i \in B_k$, and thus $B(x_i, r_1/2) \subset B(\hat{x}_k, r_2+r_1/2)$.
  Therefore,
  \[\bigcup_{1\leq i\leq N_1} B(x_i, r_1/2) \subset \bigcup_{1 \leq k \leq k_{\text{end}}} B(\hat{x}_k, r_2+r_1/2)\;.\]
  Moreover, the $N_1$  balls $B(x_i, r_1/2)$ are pairwise disjoint, because $F$ is an $r_1$-packing of $E$.
  By a volumetric argument, we thus get that $\left(r_1/2\right)^d N_1 \leq \left(r_2+r_1/2\right)^d k_{\text{end}} \leq \left(r_2+r_1/2\right)^d \mathcal{N}(E, r_2)$. Rearranging terms concludes the proof.
\end{proof}

\section{The special case of constant costs}
\label{sec:specialcase-constantcost}

In this short section we focus on the case of a constant cost $c(\alpha) = 1$. We formalize and prove the following very intuitive fact: when more accurate evaluations come at no additional cost, choosing the best accuracy available is always optimal. To avoid boundary effects, we slightly extend the setting by allowing the learner to choose $\alpha_t$ identically equal to zero. 

\begin{lemma}
  \label{lemma:no_prec}
  Let $f:\mathcal{X} \to \R$ be an $L$-Lipschitz function, and assume $c:\R_+ \to \R_+$ is the constant function given by $c(\alpha)=1$ for all $\alpha\geq 0$. Then,
  \begin{itemize}
      \item $\sprec = \inf \{ t \in \N^* \sep \xi_t(E) \leq \eps\}$;
      \item the smallest cost complexity (against the worst environment) is achieved by certified algorithms that choose $\alpha_t = 0$ for all $t \in \N^*$.
  \end{itemize}
\end{lemma}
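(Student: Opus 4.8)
The plan is to handle the two bullet points separately: the first is a direct unwinding of the definition \eqref{eq:defcostcomplexity}, while the second rests on a reduction showing that, at constant cost, exact evaluations are ``free'' and the adversarial environment loses its power.

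For the first bullet, I would substitute $c \equiv 1$ into \eqref{eq:defcostcomplexity}, so that $\sum_{t=1}^\tau c(\alpha_t(E)) = \tau$ for every $\tau \in \N^*$. The defining set becomes $\{C \in \R : \exists \tau \in \N^*,\ \tau \leq C \text{ and } \xi_\tau(E) \leq \eps\}$. Writing $\tau_0 := \inf\{t \in \N^* : \xi_t(E) \leq \eps\}$ (with $\inf\varnothing = +\infty$), I would check that this set equals $[\tau_0, +\infty)$ when $\tau_0 < +\infty$ and is empty otherwise, so that its infimum is exactly $\tau_0$ in both cases. This proves $\sprec = \inf\{t \in \N^* : \xi_t(E) \leq \eps\}$ and, as a by-product, that $\sprec$ takes values in the set $\N^* \cup \{+\infty\}$.

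For the second bullet, the key observation is that querying with $\alpha_t = 0$ forces the observation to be exact: for any $E \in \mathcal{E}(f)$ we have $E_t(x, 0) \in [f(x), f(x)] = \{f(x)\}$. Given an arbitrary certified algorithm $A$, I would build a certified algorithm $A'$ that always uses $\alpha_t' = 0$ by letting $A'$ internally simulate $A$ fed with exact feedback: at each round $A'$ queries the point $A$ would query, observes its exact value (forced by $\alpha_t' = 0$), feeds this value back to the simulated $A$, and copies $A$'s recommendation and certificate. The crucial point is that, for \emph{any} $L$-Lipschitz $g$ and \emph{any} $E \in \mathcal{E}(g)$, the trajectory of the simulated $A$ inside $A'$ coincides exactly with the trajectory of $A$ run against the noiseless environment $E^* = ((x,\alpha) \mapsto g(x))_{t\geq 1} \in \mathcal{E}(g)$, since against $E^*$ all of $A$'s observations are also exact. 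Because $A$ is certified, its certificates against $E^*$ are valid for $g$; hence $A'$'s certificates inherit this validity for every $g$, so $A'$ is itself a certified algorithm in the sense of Section~\ref{sec:setting}. It then remains to compare cost complexities: as $A'$ uses $\alpha_t' = 0$, its behaviour does not depend on the environment, so for the fixed $f$ of the lemma $\sprecAE{A'}{E} = \sprecAE{A}{E^*}$ for every $E \in \mathcal{E}(f)$, whence $\sup_{E} \sprecAE{A'}{E} = \sprecAE{A}{E^*} \leq \sup_E \sprecAE{A}{E}$ (as $E^* \in \mathcal{E}(f)$). Taking infima shows that the min-max value $\inf_{A} \sup_{E \in \mathcal{E}(f)} \sprec$ is not increased by restricting to algorithms with $\alpha_t \equiv 0$, and since by the first bullet this value lies in the well-ordered set $\N^* \cup \{+\infty\}$ it is attained, so applying the construction to an optimal $A$ yields an optimal $A'$ with $\alpha_t \equiv 0$.

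The main obstacle I anticipate is not the arithmetic but getting the reduction exactly right: one must verify that $A'$ is certified \emph{for all} $L$-Lipschitz functions and environments, not merely for the fixed $f$, which is precisely why the simulation must be phrased as ``running $A$ against the noiseless environment of the unknown true function'' rather than hard-coding $f$. Care is also needed to argue that the infimum is attained, which is where the value set being a subset of $\N^* \cup \{+\infty\}$ (from the first bullet) is used.
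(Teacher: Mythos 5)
Your proof is correct and follows essentially the same route as the paper's: reduce to the noiseless environment $E^*$, observe that the zero-accuracy copy of $A$ behaves identically to $A$ against $E^*$, and use that zero-accuracy algorithms behave identically against every environment in $\mathcal{E}(f)$. You are in fact slightly more careful than the paper on two points --- explicitly verifying that the modified algorithm outputs valid certificates for \emph{all} $L$-Lipschitz functions (the paper simply asserts $\tilde{A} \in \mathcal{A}_0$), and arguing attainment of the infimum via well-ordering of $\N^* \cup \{+\infty\}$ --- but these are refinements within the same argument, not a different approach.
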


\begin{proof}[Proof of Lemma \ref{lemma:no_prec}]
  Let $f$ be an $L$-Lipschitz function, and $\eps>0$. Since $c(\alpha) = 1$ for all $\alpha \geq 0$, the definition of $\sprec$ directly yields $\sprec = \inf\{t \in \N^* \sep \xi_t(E) \leq \eps\}$.
  
  Recall that $\mathcal{A}$ denotes the set of all certified algorithms, and that $\mathcal{E}(f)$ is the set of all environments associated with the function $f$.
  As discussed in the introduction, $\alpha_t$ depends on $E$ only via the inaccurate approximations $y_1, \ldots, y_{t-1}$ of $f$.
  We write $\alpha_t(y_1, \ldots, y_{t-1})$ instead of $\alpha_t$ to clarify this dependency when needed.
  We denote by $\mathcal{A}_0$ the set of all certified algorithms such that $\alpha_t(y_1, \ldots, y_{t-1}) = 0$ for all $t \in \N^*$ against all possible realizations of  $y_1, \ldots, y_{t-1}$.\\

\noindent
  Formally, what we want to prove is
  \[ \inf_{A' \in \mathcal{A}_0 }\sup_{E \in \mathcal{E}(f)} \sprecAE{A'}{E} \leq \inf_{A \in \mathcal{A} }\sup_{E \in \mathcal{E}(f)} \sprec\;. \]

  Let $A$ be an algorithm in $\mathcal{A}$, and let $E^* \in \mathcal{E}(f)$ be the environment for which $E^*_t(x, \alpha) = f(x)$ for all $t \in \N^*$, all $x \in \mathcal{X}$ and all $\alpha \geq 0$.
  Note that
  \begin{equation}
    \label{eq:supsprec}
    \sup_{E \in \mathcal{E}(f)} \sprec \geq \sprecAE{A}{E^*}
  \end{equation}

  Now, let $\tilde{A}$ be the same algorithm as $A$, with one difference: whatever the situation and the environment $E$, for all $t \in \N$, $\tilde{\alpha}_t(y_1, \ldots, y_{t-1}) = 0$.
  Then $\tilde{A} \in \mathcal{A}_0$.
  Moreover, because of the particularity of $E^*$, $A$ and $\tilde{A}$ will behave the same way against $E^*$: the sequences of query points $(x_t)_{t \in \N^*}$, recommendations $(x_t^*)_{t \in \N^*}$ and certificates $(\xi_t)_{t \in \N^*}$ are the same for $A$ and for $\tilde{A}$. From this, we get that $ \sprecAE{A}{E^*} = \sprecAE{\tilde{A}}{E^*}$.
  Combining with \eqref{eq:supsprec} yields
  \begin{equation}
    \inf_{A \in \mathcal{A}} \sup_{E \in \mathcal{E}(f)} \sprec \geq \inf_{A \in \mathcal{A}} \sprecAE{A}{E^*} \geq \inf_{\tilde{A} \in \mathcal{A}_0} \sprecAE{\tilde{A}}{E^*}
        \label{eq:sprectildeA}
  \end{equation}

  Let us now show that
  \begin{equation}
    \inf_{A' \in \mathcal{A}_0} \sprecAE{A'}{E^*} = \inf_{A' \in \mathcal{A}_0} \sup_{E \in \mathcal{E}(f)} \sprecAE{A'}{E},
    \label{eq:sprecEstar}
  \end{equation}
  which, combined with \eqref{eq:sprectildeA}, will conclude the proof.

  To see why \eqref{eq:sprecEstar} holds, let $A'$ be any certified algorithm with $\alpha_t=0$ for all $t \in  \N^*$, whatever the past observations $y_1, \ldots, y_{t-1}$, and let $E$, and $E'$ be two environments in $\mathcal{E}(f)$.
  Then, for all $x \in \mathcal{X}$, $E_t(x, 0) = E'_t(x, 0) = f(x)$, because environments should satisfy $\abs{E_t(x, \alpha) - f(x)} \leq \alpha$ for all $x \in \mathcal{X}$ and $\alpha \geq 0$.
  Therefore, the behavior of $A'$ against any $E \in \mathcal{E}(f)$ is the same as against $E^*$ which proves \eqref{eq:sprecEstar}.
\end{proof}

\bibliographystyle{siamplain}
\bibliography{biblio}

\end{document}